\theoremstyle{definition}
\newtheorem{definition}{Definition}
\newtheorem{theorem}{Theorem}
\newcommand{\datasetFont}{\text}
\newcommand{\ours}{\datasetFont{UR-ERN}\xspace}
\title{Uncertainty Regularized Evidential Regression}
\author{
    Kai Ye\textsuperscript{\rm 1}, Tiejin Chen\textsuperscript{\rm 2}, Hua Wei\textsuperscript{\rm 2$*$}, Liang Zhan\textsuperscript{\rm 1}\thanks{Corresponding author.}
}
\begin{document}

\maketitle

\begin{abstract}
The Evidential Regression Network (ERN) represents a novel approach that integrates deep learning with Dempster-Shafer's theory to predict a target and quantify the associated uncertainty. Guided by the underlying theory, specific activation functions must be employed to enforce non-negative values, which is a constraint that compromises model performance by limiting its ability to learn from all samples. This paper provides a theoretical analysis of this limitation and introduces an improvement to overcome it. Initially, we define the region where the models can't effectively learn from the samples. Following this, we thoroughly analyze the ERN and investigate this constraint. Leveraging the insights from our analysis, we address the limitation by introducing a novel regularization term that empowers the ERN to learn from the whole training set. Our extensive experiments substantiate our theoretical findings and demonstrate the effectiveness of the proposed solution.
\end{abstract}

\section{Introduction}
Deep learning methods have been successful in a broad spectrum of real-world tasks, including computer vision~\cite{godard2017unsupervised,he2016deep}, natural language processing~\cite{zhao2023evaluating,devlin2018bert,vaswani2017attention}, and medical domain~\cite{ye2023bidirectional}. 
In these scenarios, evaluating model uncertainty becomes a crucial element. Within the realm of deep learning, uncertainty is generally categorized into two primary groups: the intrinsic randomness inherent in data, referred to as the \textit{aleatoric uncertainty}, and the uncertainty associated with model parameters, known as the \textit{epistemic uncertainty}~\cite{gal2016dropout,guo2017calibration}.

Among these, accurately quantifying the uncertainty linked to the model's parameters proves to be a particularly demanding task, due to the inherent complexity involved.
To tackle this, strategies such as Ensemble-based methods~\cite{pearce2020uncertainty,lakshminarayanan2017simple} and Bayesian neural networks (BNNs)~\cite{gal2016dropout,wilson2020bayesian,blundell2015weight} have been proposed to measure epistemic uncertainty. 
Nonetheless, these methods either demand substantial computational resources or encounter challenges in scalability. In response to these limitations, the concept of evidential deep learning techniques~\cite{sensoy2018evidential,NEURIPS2020_aab08546,malinin2018predictive} has emerged. These methods are formulated to handle uncertainty estimation by producing distribution parameters as their output.


\begin{figure}
  \centering
  \begin{tabular}{c}
       \includegraphics[width=0.9\linewidth]{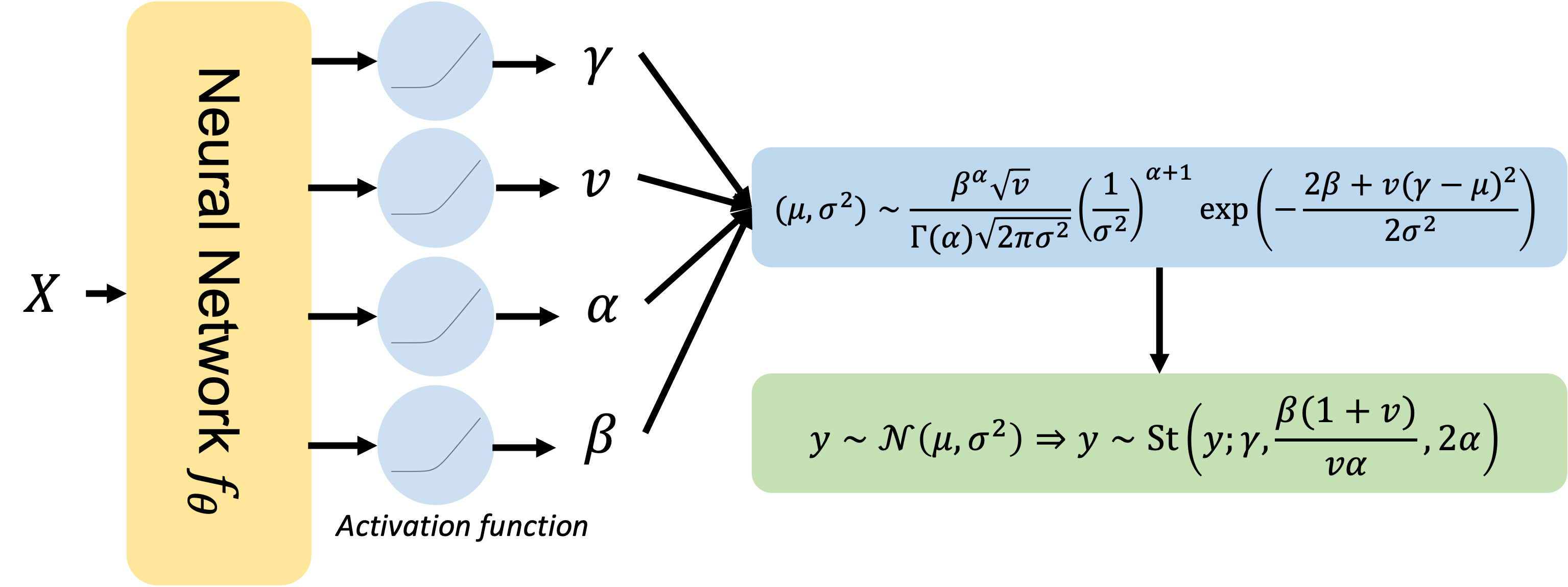}  \\
       (a) Evidential Regression Network (ERN) architecture \\
       \includegraphics[width=0.9\linewidth]{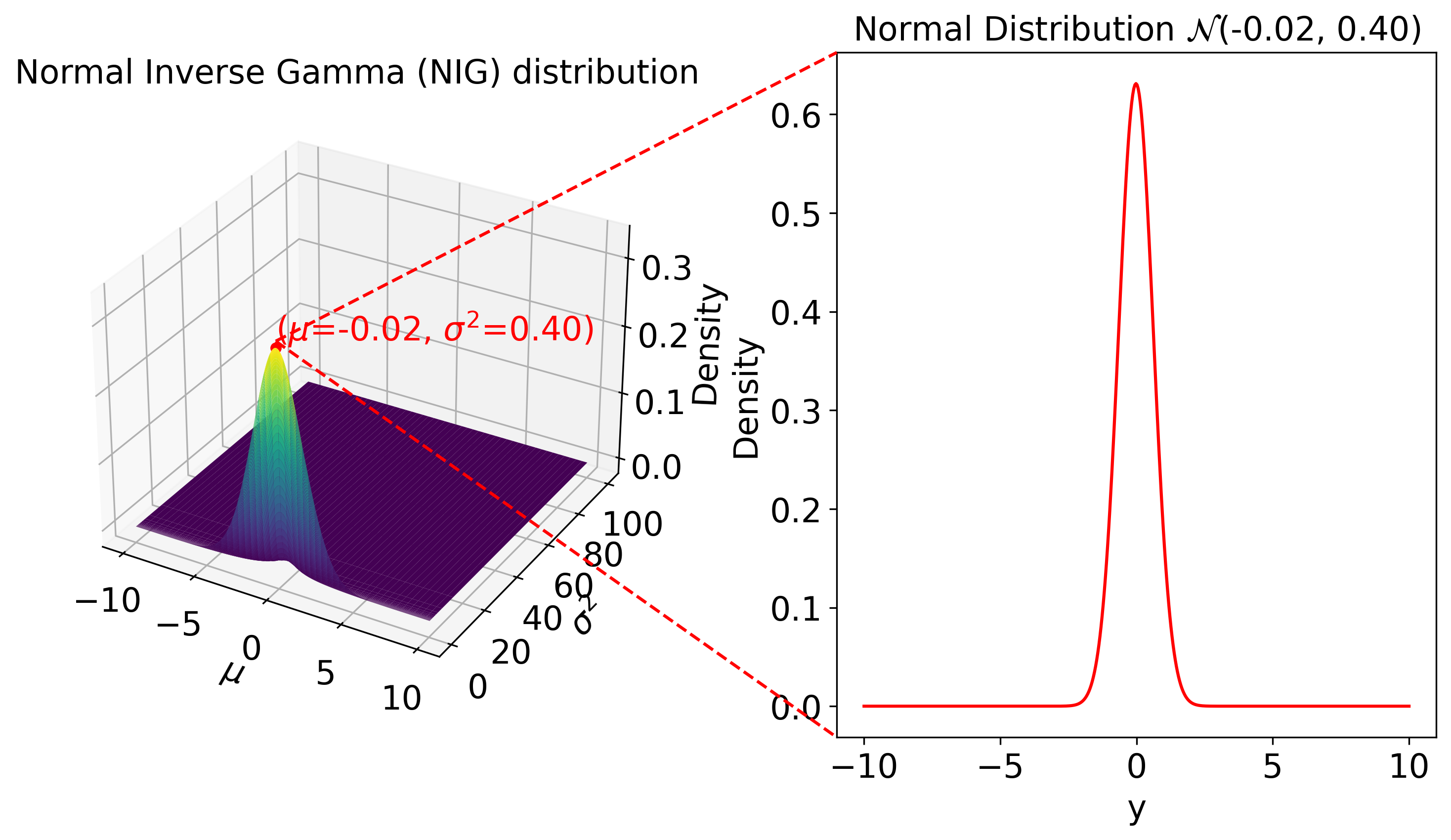}  \\
       (b) Normal Inverse-Gamma (NIG) distribution.\\
  \end{tabular}
  \caption{ An overview of the Evidential Regression Network (ERN) architecture with illustrations on the final distributions of the prediction. ERN outputs four predictions as distribution parameters, with activation functions like Relu or Softplus to constrain the output to meet the requirements of distribution parameters.}
  \label{uncertainty_framework}
\end{figure}

The Evidential Regression Network (ERN)~\cite{NEURIPS2020_aab08546} introduces a novel deep-learning regression approach that incorporates Dempster-Shafer theory~\cite{shafer1976mathematical} to quantify model uncertainty, resulting in impressive achievements. Within the ERN framework, the training process is conceptualized as an evidence acquisition process, which is inspired by evidential models for classification~\cite{malinin2018predictive,malinin2019reverse,bilovs2019uncertainty,haussmann2019bayesian,malinin2019ensemble}. 
During the training phase, ERN establishes prior distributions over the likelihood function, and each training sample contributes to the formation of a higher-order evidential distribution from which the likelihood function is drawn. 
During the inference phase, ERN produces the hyperparameters of the evidential distribution, facilitating both prediction and uncertainty estimation without the necessity for sampling. 
This approach was subsequently extended to multivariate regression tasks by~\citeauthor{meinert2021multivariate} using different prior distributions. 

Previous ERN methods \cite{NEURIPS2020_aab08546,malinin2020regression,charpentier2021natural,oh2022improving,feng2023deep,mei2023uncertainty} use specific activation functions like ReLU to ensure non-negative values for parameters of the evidential distribution, such as the variance.
Nevertheless, the utilization of such activation functions may inadvertently hinder ERN models' capacity to learn effectively from training samples, thereby impairing overall model performance~\cite{pandey2023learn}. 
Furthermore, in classification tasks, evidential models have underperformed because of the existence of "zero confidence regions" within the evidential space~\cite{pandey2023learn}.


However, there is a notable lack of convergence analysis for evidential models in the context of regression tasks.
In this paper, we explore the existence of zero confidence regions, which result in high uncertainty areas (HUA) during the training process of ERN models for regression tasks.
Building upon the insights derived from our analysis, we propose a novel regularization term that enables the ERN to bypass the HUA and effectively learn from the zero-confidence regions. 
We also show that the proposed regularization can be generalized to various ERN variants.
We conduct experiments on both synthetic and real-world data and show the effectiveness of the proposed method\footnote{Code is at https://github.com/FlynnYe/UR-ERN}. 
The main contributions of our work are summarized as follows:
\begin{itemize}
\item We revealed the existence of HUA in the learning process of ERN methods with theoretical analysis. The existence of HUA impedes the learning ability of evidential regression models, particularly in regions where ERN exhibits low confidence.
\item We propose a novel uncertainty regularization term designed to handle this HUA in evidential regression models and provide theoretical proof of its effectiveness.
\item Extensive experiments across multiple datasets and tasks are conducted to validate our theoretical findings and demonstrate the effectiveness of our proposed solution.
\end{itemize}

\section{Background}
\subsection{Problem Setup}
In the context of our study, we consider a regression task derived from a dataset $\mathcal{D}=\left\{\left(X_i, y_i\right)\right\}_{i=1}^N$, where $X_i \in \mathbb{R}^d$ denotes an independently and identically distributed (i.i.d.) input vector with $d$ dimensions. Corresponding to each input $X_i$, we have a real-valued target $y_i \in \mathbb{R}$. Our dataset comprises $N$ samples and the task is to predict the targets based on the input data points. We tackle the regression task by modeling the probabilistic distribution of the target variable $y$, which is formulated as $p\left(y \mid f_{\boldsymbol{\theta}}(X)\right)$, where $f$ refers to a neural network, and $\boldsymbol{\theta}$ denotes its parameters. For simplicity, we omit the subscript $i$.

\subsection{Evidential Regression Network}
As is illustrated in Figure~\ref{uncertainty_framework}, Evidential Regression Network (ERN)~\cite{NEURIPS2020_aab08546} introduces a Gaussian distribution $\mathcal{N}\left(\mu, \sigma^2\right)$ with unknown mean $\mu$ and variance $\sigma$ for modeling the regression problem. It is generally assumed that a target value $y$ is drawn i.i.d. from the Gaussian distribution, and that the unknown parameters $\mu$ and $\sigma$ follow a Normal Inverse-Gamma (NIG) distribution:
\begin{equation} \label{eq:1}
\begin{gathered}
y \sim \mathcal{N}\left(\mu, \sigma^2\right) \\
\mu \sim \mathcal{N}\left(\gamma, \sigma^2 v^{-1}\right) \quad \sigma^2 \sim \Gamma^{-1}(\alpha, \beta) \\
(\mu, \sigma^2) \sim \operatorname{NIG}(\gamma, v, \alpha, \beta)
\end{gathered}
\end{equation}
where $\Gamma(\cdot)$ is the gamma function, parameters $\boldsymbol{m}=(\gamma, v, \alpha, \beta)$, and $\gamma \in \mathbb{R}$, $v>0$, $\alpha>1$, $\beta>0$. The parameters of NIG distribution $\boldsymbol{m}$ is modeled by the output of a neural network $f_{\boldsymbol{\theta}}(\cdot)$, where $\boldsymbol{\theta}$ is the trainable parameters of such neural network. To enforce constraints on $(v, \alpha, \beta)$, a $\operatorname{SoftPlus}$ activation is applied (additional +1 added to $\alpha$). Linear activation is used for $\gamma \in \mathbb{R}$.
Considering the NIG distribution in Eq~\ref{eq:1}, the prediction, aleatoric uncertainty, and epistemic uncertainty can be calculated as the following:
\begin{equation}
\begin{gathered}
\underbrace{\mathbb{E}[\mu]=\gamma}_{\text {prediction }} 
\quad \underbrace{\mathbb{E}\left[\sigma^2\right]=\frac{\beta}{\alpha-1}}_{\text {aleatoric }} \quad  \underbrace{\operatorname{Var}[\mu]=\frac{\beta}{v(\alpha-1)}}_{\text {epistemic }} 
\end{gathered}
\end{equation}
Therefore, we can use $\mathbb{E}[\mu]=\gamma$ as the prediction of ERN, $\mathbb{E}\left[\sigma^2\right]=\frac{\beta}{\alpha-1}$ and $\operatorname{Var}[\mu]=\frac{\beta}{v(\alpha-1)}$ as the uncertainty estimation of ERN.

The likelihood of an observation $y$ given $\boldsymbol{m}$ is computed by marginalizing over $\mu$ and $\sigma^2$:
\begin{equation}
\label{eq:ern-prob}
p\left(y \mid \boldsymbol{m}\right) 
=\mathrm{St}\left(y ; \gamma, \frac{\beta(1+v)}{v \alpha}, 2 \alpha\right)
\end{equation}
where $\mathrm{St}\left(y ; \mu_{\mathrm{St}}, \sigma_{\mathrm{St}}^2, v_{S t}\right)$ is the Student-t distribution with location $\mu_{\mathrm{St}}$, scale $\sigma_{\mathrm{St}}^2$ and degrees of freedom $v_{S t}$.

\subsubsection{Training Objective of ERN}
The parameters $\boldsymbol{\theta}$ of ERN are trained by maximizing the marginal likelihood in Eq.~\ref{eq:ern-prob}. The training objective is to minimize the negative logarithm of $p\left(y \mid \boldsymbol{m}\right)$, therefore the negative log likelihood (NLL) loss function is formulated as:
\begin{equation}
\begin{aligned}
\mathcal{L}_{{\theta}}^{\mathrm{NLL}}&=\frac{1}{2} \log \left(\frac{\pi}{v}\right)-\alpha \log (\Omega) \\
&+\left(\alpha+\frac{1}{2}\right) \log \left(\left(y-\gamma\right)^2 v +\Omega\right) \\
&+\log \left(\frac{\Gamma(\alpha)}{\Gamma\left(\alpha+\frac{1}{2}\right)}\right)
\end{aligned}
\end{equation}
where $\Omega=2 \beta(1+v)$.

To minimize the evidence on errors, the regularization term $\mathcal{L}_{\theta}^{\mathrm{R}} = \left|y-\gamma\right| \cdot(2 v+\alpha)$ is proposed to minimize evidence on incorrect predictions. Therefore, the loss function of ERN is:
\begin{equation}
\label{eq:ern-loss}
    \mathcal{L}_{\theta}^{\mathrm{ERN}} = \mathcal{L}_{{\theta}}^{\mathrm{NLL}} + \lambda \mathcal{L}_{\theta}^{\mathrm{R}}
\end{equation}
where $\lambda$ is a settable hyperparameter. For simplicity, we omit $\theta$ in the following sections.

\subsection{Variants of ERN}
\label{sec:back:variant}
ERN is for univariate regression and has been extended to multivariate regression with a different prior distribution normal-inverse-Wishart (NIW) distribution~\cite{meinert2021multivariate}. Multivariate ERN employs an NIW distribution and, similar to ERN, formulates the loss function as (see~\citeauthor{meinert2021multivariate} for details):
\begin{equation}
\begin{aligned}
\mathcal{L}^{\mathrm{MERN}} & \equiv  \mathcal{L}^{\mathrm{NLL}}=  \log \Gamma\left(\frac{\nu-n+1}{2}\right)-\log \Gamma\left(\frac{\nu+1}{2}\right) \\
& +\frac{n}{2} \log \left(r+\nu\right)-\nu \sum_j \ell_j \\
& +\frac{\nu+1}{2} \log \left|\boldsymbol{L} \boldsymbol{L}^{\top}+\frac{1}{r+\nu}\left(\vec{y}-\vec{\mu}_{0}\right)\left(\vec{y}-\vec{\mu}_{0}\right)^{\top}\right| \\
& +\text { const. }
\end{aligned}
\end{equation}

And estimation of the prediction as well as uncertainties as:
\begin{equation}
\begin{gathered}
\underbrace{\mathbb{E}[\mu]=\vec{\mu}_{0}}_{\text {prediction }} \quad 
\underbrace{\mathbb{E}[\boldsymbol{\Sigma}] \propto \frac{\nu}{\nu-n-1} \boldsymbol{L} \boldsymbol{L}^{\top}}_{\text {aleatoric }}  \\
\underbrace{\operatorname{var}[\vec{\mu}] \propto \mathbb{E}[\boldsymbol{\Sigma}] / \nu}_{\text {epistemic }}
\end{gathered}
\end{equation}

To learn the parameters $\boldsymbol{m}=\left(\vec{\mu}_{0}, \vec{\ell}, \nu\right)$, a NN has to have  $n(n+3) / 2+1$ outputs ($p_{1} \cdots p_{m}$). Also, activation functions have to be applied to the outputs of NN to ensure the following:
\begin{equation}
    \nu = n(n+5)/2+\tanh p_{\nu} \cdot  n(n+3)/2  + 1 > n+1
\end{equation}
where $p_{\nu} \in (p_{1} \cdots p_{m})$. And
\begin{equation}
\left(\boldsymbol{L}\right)_{j k}= \begin{cases}\exp \left\{\ell_j\right\} & \text { if } j=k \\ \ell_{j k} & \text { if } j>k \\ 0 & \text { else. }\end{cases}
\end{equation}
where $\ell_j, \ell_{j k} \in (p_{1} \cdots p_{m})$.

\section{Methodology}
In this section, we first give a definition of the High Uncertainty Area (HUA). Then, we theoretically analyze the existing limitation of ERN in HUA. Based on our analysis, we propose a novel solution to the problem. Finally, we extend our analysis and propose solutions to variants of ERN with other prior distributions.

\subsection{High Uncertainty Area (HUA) of ERN}
In this section, we show that in the high uncertainty area of ERN, the gradient of ERN will shrink to zero, therefore the outputs of ERN cannot be correctly updated. In this paper, we only study the gradient with respect to $\alpha$ as the gradient with respect to $v$ and $\beta$ follows a similar fashion.
\begin{definition}[\textbf{High Uncertainty Area}]
\textit{High Uncertainty Area is where $\alpha$ is close to 1, leading to very high uncertainty prediction.} 
\end{definition}

An effective model ought to possess the capacity to learn from the entire training samples. Unfortunately, this does not hold true in the context of ERN.
\begin{theorem}
\label{proof1}
\textit{ERN cannot learn from samples in high uncertainty area.}
\end{theorem}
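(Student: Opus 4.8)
The plan is to show that, along any sequence of outputs entering the high uncertainty area (i.e.\ $\alpha\to 1^{+}$), the learning signal for $\alpha$ — the gradient of $\mathcal{L}^{\mathrm{ERN}}$ with respect to the raw network output that is fed into the activation — vanishes, so gradient-based training cannot move such a sample away from the degenerate high-variance regime. Since $\alpha$ is produced as $\alpha=\operatorname{SoftPlus}(p_\alpha)+1=\log(1+e^{p_\alpha})+1$, the chain rule gives $\partial\mathcal{L}^{\mathrm{ERN}}/\partial p_\alpha=(\partial\mathcal{L}^{\mathrm{ERN}}/\partial\alpha)\cdot(\partial\alpha/\partial p_\alpha)$, so it is enough to establish two facts: (i) the ``outer'' factor $\partial\mathcal{L}^{\mathrm{ERN}}/\partial\alpha$ stays bounded as $\alpha\to1$, and (ii) the ``inner'' factor $\partial\alpha/\partial p_\alpha$ tends to $0$ there.

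For (i) I would differentiate the loss directly. Writing $\Delta=y-\gamma$ and $\Omega=2\beta(1+v)$, the $\frac12\log(\pi/v)$ term drops out and a short computation gives
\[
\frac{\partial\mathcal{L}^{\mathrm{NLL}}}{\partial\alpha}=\log\!\Big(1+\frac{\Delta^{2}v}{\Omega}\Big)+\psi(\alpha)-\psi\!\big(\alpha+\tfrac12\big),
\]
where $\psi$ is the digamma function, while the regularizer adds the bounded term $\lambda|\Delta|$. For $\alpha$ near $1$ and $v,\beta$ ranging over any fixed compact subset of $(0,\infty)$, the logarithm is finite and nonnegative and $\psi$ is smooth, so this outer factor is uniformly bounded. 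This is the step I expect to demand the most care: the high uncertainty area is characterized by $\alpha\to1$ \emph{only}, and one must argue — or inherit as part of the ERN construction — that $v$ and $\beta$ do not simultaneously escape to $0$ or $\infty$; otherwise the $\log$ term could itself diverge and the product in (i)$\times$(ii) would become an indeterminate $0\cdot\infty$ rather than a clean limit.

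For (ii) I would analyze the $\operatorname{SoftPlus}$ link. From $\log(1+e^{p_\alpha})=\alpha-1$ we get $1+e^{p_\alpha}=e^{\alpha-1}$, so $p_\alpha\to-\infty$ exactly as $\alpha\to1^{+}$, and the derivative of $\operatorname{SoftPlus}$ is the sigmoid:
\[
\frac{\partial\alpha}{\partial p_\alpha}=\frac{e^{p_\alpha}}{1+e^{p_\alpha}}=1-e^{-(\alpha-1)}=(\alpha-1)+O\big((\alpha-1)^{2}\big),
\]
so the inner factor vanishes (linearly in $\alpha-1$). Combining with the bound from the previous paragraph yields $\partial\mathcal{L}^{\mathrm{ERN}}/\partial p_\alpha=O(\alpha-1)\to0$ as the sample enters the HUA.

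Finally I would interpret this: because any gradient step changes $p_\alpha$ — and hence $\alpha$ — in proportion to $\partial\mathcal{L}^{\mathrm{ERN}}/\partial p_\alpha$, a sample that has been driven into the high uncertainty area contributes a vanishing update, so its predicted NIG distribution stays pinned at $\alpha\approx1$ and its (large) uncertainty is never corrected; this is precisely the assertion that ERN cannot learn from samples in the high uncertainty area. I would close by noting that the identical computation for $v$ and $\beta$ through their own $\operatorname{SoftPlus}$ links produces the same vanishing behavior, which is why it suffices to carry out the argument for $\alpha$.
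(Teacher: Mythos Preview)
Your proposal is correct and follows essentially the same approach as the paper: both factor $\partial\mathcal{L}^{\mathrm{ERN}}/\partial o_\alpha$ via the chain rule, compute $\partial\mathcal{L}^{\mathrm{NLL}}/\partial\alpha=\log\!\big(1+\tfrac{v(\gamma-y)^2}{2\beta(1+v)}\big)+\psi(\alpha)-\psi(\alpha+\tfrac12)$ and $\partial\mathcal{L}^{\mathrm{R}}/\partial\alpha=|y-\gamma|$, and then observe that the activation derivative $\partial\alpha/\partial o_\alpha=\operatorname{Sigmoid}(o_\alpha)\to 0$ as $\alpha\to1^+$ kills the whole gradient. Your treatment is in fact slightly more careful than the paper's: you flag the need for the outer factor to stay bounded (avoiding a $0\cdot\infty$ indeterminacy if $v$ or $\beta$ degenerate) and you extract the linear rate $\partial\alpha/\partial o_\alpha=(\alpha-1)+O((\alpha-1)^2)$, whereas the paper simply asserts the product is zero once the sigmoid factor vanishes.
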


\begin{proof}
Consider input $X$ and the corresponding label $y$. We use $\boldsymbol{o}=(o_{\gamma}, o_{v}, o_{\alpha}, o_{\beta})$ to denote the output of $f_{\boldsymbol{\theta}}(X)$, therefore:
\begin{equation}
\begin{aligned}
\alpha&=\operatorname{SoftPlus}(o_{\alpha}) + 1  \\
      &= \log(\exp(o_{\alpha})+1)+1 
\end{aligned}
\end{equation}
Where $\operatorname{SoftPlus}(\cdot)$ denotes $\operatorname{SoftPlus}$ activation (our theorem still holds true when faced with other popular activation functions, such as $\operatorname{ReLU}$ and $\operatorname{exp}$, see Appendix~\ref{appendix_1} for additional proofs). 
So the gradient of NLL loss with respect to $o_{\alpha}$ is given by:
\begin{equation}
\begin{aligned}
\frac{\partial \mathcal{L}^{\mathrm{NLL}}}{\partial o_\alpha} &= \frac{\partial \mathcal{L}^{\mathrm{NLL}}}{\partial \alpha} \frac{\partial \alpha}{\partial o_{\alpha}}     \\
&=[\log(1+\frac{\nu(\gamma-y)^2}{2\beta(\nu+1)}) +\psi(\alpha) \\
&- \psi(\alpha+0.5)] \cdot \operatorname{Sigmoid}\left(o_\alpha\right)
\end{aligned}
\end{equation}
where $\psi(\cdot)$ denotes the digamma function.

For a sample in high uncertainty area, we have:
\begin{equation}
    \alpha \rightarrow 1 \Rightarrow o_\alpha \rightarrow-\infty \Rightarrow \operatorname{Sigmoid}\left(o_\alpha\right) \rightarrow 0
\end{equation}
So, for such training samples:
\begin{equation}
    \frac{\partial \mathcal{L}^{\mathrm{NLL}}}{\partial o_\alpha} = 0
\end{equation}

And the gradient of $\mathcal{L}^{\mathrm{R}}=|y-\gamma| \cdot(2 v+\alpha)$ with respect to $o_{\alpha}$ is given by:
\begin{equation}
\begin{aligned}
    \frac{\partial \mathcal{L}^{\mathrm{R}}}{\partial o_\alpha}&=\frac{\partial \mathcal{L}^{\mathrm{R}}}{\partial \alpha} \frac{\partial \alpha}{\partial o_\alpha} \\
    &= |y-\gamma| \cdot \operatorname{Sigmoid}\left(o_\alpha\right)
\end{aligned}
\end{equation}
Similarly, we have:
\begin{equation}
    \frac{\partial \mathcal{L}^{\mathrm{R}}}{\partial o_\alpha} = 0
\end{equation}
And $\mathcal{L}^{\mathrm{ERN}}=\mathcal{L}^{\mathrm{NLL}}+\lambda \mathcal{L}^{\mathrm{R}}$, therefore we have:
\begin{equation}
\begin{aligned}
\frac{\partial \mathcal{L}^{\mathrm{ERN}}}{\partial o_\alpha} &= \frac{\partial \mathcal{L}^{\mathrm{NLL}}}{\partial o_\alpha} +\lambda \frac{\partial \mathcal{L}^{\mathrm{R}}}{\partial o_\alpha} \\
&=0
\end{aligned}
\end{equation}
\end{proof}

Since the gradient of the loss function with respect to $o_{\alpha}$ is zero, there won't be any update on $\alpha$ from such samples. The model fails to learn from samples in high uncertainty area.

\subsection{Uncertainty Regularization To Bypass HUA}
Considering the learning deficiency of ERN, in this paper, we propose an uncertainty regularization to solve the zero gradient problem within HUA:
\begin{equation}
    \mathcal{L}^{\mathrm{U}} =  -| y-\gamma | \cdot  \log(\exp(\alpha-1)-1) 
\end{equation}
In this section, we show that $\mathcal{L}^{\mathrm{U}}$ can address the learning deficiency of ERN.
\begin{theorem}
\textit{Our proposed uncertainty regularization $\mathcal{L}^{\mathrm{U}}$ can learn from samples within HUA.}
\end{theorem}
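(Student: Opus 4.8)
The plan is to exploit the deliberately chosen algebraic form of $\mathcal{L}^{\mathrm{U}}=-|y-\gamma|\cdot\log(\exp(\alpha-1)-1)$, which is built precisely to invert the SoftPlus link. First I would substitute $\alpha=\operatorname{SoftPlus}(o_\alpha)+1=\log(\exp(o_\alpha)+1)+1$ into the regularizer. This gives $\alpha-1=\log(\exp(o_\alpha)+1)$, hence $\exp(\alpha-1)-1=\exp(o_\alpha)$, and therefore $\log(\exp(\alpha-1)-1)=o_\alpha$. Consequently $\mathcal{L}^{\mathrm{U}}=-|y-\gamma|\cdot o_\alpha$, a function that is \emph{linear} in $o_\alpha$. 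I would also note in passing that $\mathcal{L}^{\mathrm{U}}$ is well defined on the admissible region, since $\alpha>1\Rightarrow\exp(\alpha-1)-1>0$.

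Second, I would differentiate. Because $\gamma$ is produced by a separate output head and does not depend on $o_\alpha$, we obtain $\partial\mathcal{L}^{\mathrm{U}}/\partial o_\alpha=-|y-\gamma|$. The crucial observation is that this expression carries no factor of $\operatorname{Sigmoid}(o_\alpha)$ (nor any other quantity that decays as $o_\alpha\to-\infty$); it is constant in $o_\alpha$. Hence in the High Uncertainty Area, where $\alpha\to1\Leftrightarrow o_\alpha\to-\infty$, we still have $\partial\mathcal{L}^{\mathrm{U}}/\partial o_\alpha=-|y-\gamma|\neq0$ whenever the prediction is imperfect ($y\neq\gamma$); and when $y=\gamma$ no update is needed anyway, so the degenerate case is harmless.

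Third, I would combine this with Theorem~\ref{proof1}. For the augmented objective $\mathcal{L}^{\mathrm{NLL}}+\lambda\mathcal{L}^{\mathrm{R}}+\lambda_u\mathcal{L}^{\mathrm{U}}$, on a sample in HUA the first two terms contribute zero gradient (Theorem~\ref{proof1}), so $\partial(\cdot)/\partial o_\alpha=-\lambda_u|y-\gamma|\neq0$. A gradient-descent step therefore updates $o_\alpha\mapsto o_\alpha+\eta\lambda_u|y-\gamma|$, which increases $\alpha$ and drives the sample away from $\alpha=1$, i.e.\ out of the HUA — exactly the escape behavior we want. This establishes that $\mathcal{L}^{\mathrm{U}}$ restores learning from HUA samples.

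There is no genuine obstacle here; the proof is a short algebraic cancellation, and the only point requiring care is the same one noted after Theorem~\ref{proof1}: the argument must be rerun for the alternative activations (ReLU, exp) and for the ERN variants with NIW priors, where the analogue of $\mathcal{L}^{\mathrm{U}}$ must be constructed to cancel the corresponding link function. I would check that in each case the inverse-activation construction removes the vanishing Jacobian factor and leaves a gradient bounded away from zero, deferring those verifications to the appendix.
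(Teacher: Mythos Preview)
Your proposal is correct and follows essentially the same approach as the paper: both compute $\partial\mathcal{L}^{\mathrm{U}}/\partial o_\alpha=-|y-\gamma|$ and conclude the gradient survives in the HUA. The only difference is order of operations—you substitute $\alpha=\operatorname{SoftPlus}(o_\alpha)+1$ first to reveal $\mathcal{L}^{\mathrm{U}}=-|y-\gamma|\cdot o_\alpha$ and then differentiate trivially, whereas the paper applies the chain rule $\frac{\partial\mathcal{L}^{\mathrm{U}}}{\partial\alpha}\cdot\frac{\partial\alpha}{\partial o_\alpha}$ and simplifies the resulting product $-|y-\gamma|\cdot\frac{\exp(\alpha-1)}{\exp(\alpha-1)-1}\cdot\operatorname{Sigmoid}(o_\alpha)$ to the same constant; your route is slightly cleaner and makes the inverse-activation design principle explicit.
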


\begin{proof}
The gradient of proposed regularization term $\mathcal{L}^{\mathrm{U}}$ with respect to $o_{\alpha}$ is given by:
\begin{equation}
\begin{aligned}
\frac{\partial \mathcal{L}^{\mathrm{U}}}{\partial o_\alpha} &= \frac{\partial \mathcal{L}^{\mathrm{U}}}{\partial \alpha} \frac{\partial \alpha}{\partial o_{\alpha}}     \\
&=-| y-\gamma | \cdot \frac{\exp (\alpha-1)}{\exp (\alpha-1)-1} \cdot \operatorname{Sigmoid}\left(o_\alpha\right) \\
&=-| y-\gamma | \cdot \left[1+\exp \left(-o_\alpha\right)\right] \cdot \frac{1}{1+\exp \left(-o_\alpha\right)} \\
&=-| y-\gamma |
\end{aligned}
\end{equation}
\end{proof}
Uncertainty regularization term $\mathcal{L}^{\mathrm{U}}$ ensures the maintenance of the gradient within the high uncertainty area. Importantly, the value of this gradient scales in accordance with the distance between the predicted value and the ground truth.

\begin{figure}
\centering
\includegraphics[width=0.9\columnwidth]{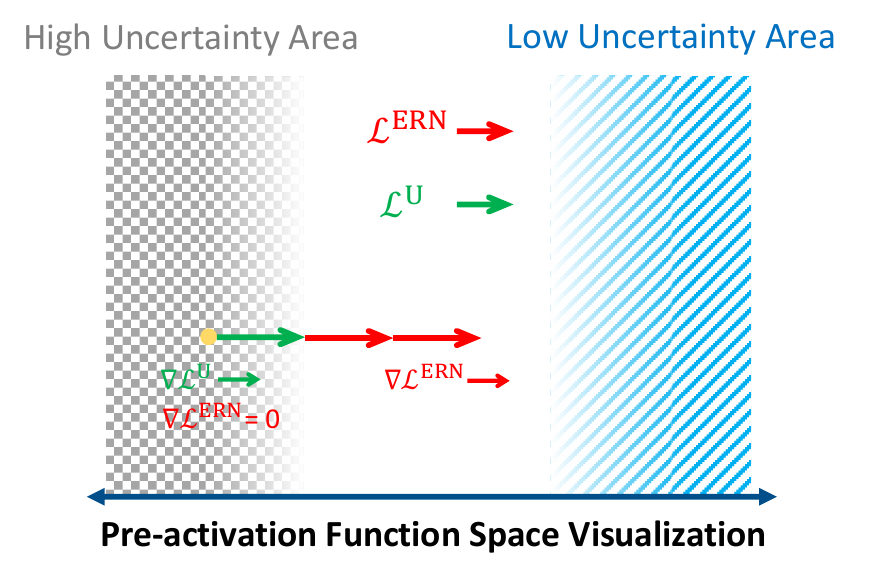} 
\caption{$\mathcal{L}^{\mathrm{ERN}}$ in Equation~\eqref{eq:ern-loss} cannot help the model get out of high uncertainty area while our proposed $\mathcal{L}^{\mathrm{U}}$ can still learn from samples in the grey area.}
\label{uncertainty_visualization}
\end{figure}

\subsubsection{Training of Regularized ERN}
The final training objective for the proposed Uncertainty Regularized ERN (UR-ERN) is formulated as:
\begin{equation}
    \mathcal{L} = \mathcal{L}^{\mathrm{ERN}} +\lambda_{1} \mathcal{L}^{\mathrm{U}}
\end{equation}
where $\lambda_{1}$ is a settable hyperparameter that balances the regularization and the original ERN loss. $\mathcal{L}^{\mathrm{NLL}}$ is for fitting purpose, $\mathcal{L}^{\mathrm{R}}$ regularizes evidence~\cite{NEURIPS2020_aab08546}. And our proposed $\mathcal{L}^{\mathrm{U}}$ addresses zero gradient problem in the HUA. 

\subsubsection{Uncertainty Space Visualization}
Figure~\ref{uncertainty_visualization} visualizes the uncertainty space with $x$-axis representing $o_{\alpha}$. Under ideal conditions, both fitting loss and uncertainty should be low, resulting in samples being mapped to the blue zone. Nevertheless, there exist certain samples predicted with high uncertainty, which may land within the grey region. Within this grey region, $\mathcal{L}^{\mathrm{ERN}}$ fails to update the parameters effectively. Under such circumstances, our proposed uncertainty regularization term $\mathcal{L}^{\mathrm{U}}$ retains the capacity to update the model. This enables the samples to be extracted from the grey area, thus allowing the training to continue.

\subsection{Uncertainty Regularization for ERN Variants}
Based on our theoretical analysis in previous sections, it is quite clear that the zero gradient problem in the HUA of ERN is attributable to certain activation functions that ensure non-negative values. Consequently, this limitation is not confined to ERN but can also extend to other evidential models that utilize similar activation functions. Multivariate ERN~\cite{meinert2021multivariate}, which we introduced in Section~\ref{sec:back:variant}, serves as an illustrative example; it suffers from similar problems to ERN, even when employing different prior distributions. Similar to the previous analysis, we study the parameter $\nu$ as an example. 

\begin{theorem}
\textit{Multivariate ERN~\cite{meinert2021multivariate} also cannot learn from samples in high uncertainty area.}
\end{theorem}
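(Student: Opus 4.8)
The plan is to replay the proof of Theorem~\ref{proof1}, replacing the pair $(\alpha, o_\alpha)$ and its $\operatorname{SoftPlus}$ link by the pair $(\nu, p_\nu)$ and the $\tanh$-based link that enforces $\nu > n+1$. First I would pin down what the high uncertainty area means for the multivariate model: since the aleatoric estimate $\mathbb{E}[\boldsymbol{\Sigma}] \propto \frac{\nu}{\nu-n-1}\boldsymbol{L}\boldsymbol{L}^{\top}$ (hence also $\operatorname{var}[\vec{\mu}]$) diverges precisely as $\nu \downarrow n+1$, the role played by ``$\alpha$ close to $1$'' in ERN is played here by ``$\nu$ close to $n+1$''. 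From the activation $\nu = n(n+5)/2 + \tanh(p_\nu)\,n(n+3)/2 + 1$ I would compute $\dfrac{\partial \nu}{\partial p_\nu} = \dfrac{n(n+3)}{2}\bigl(1-\tanh^2 p_\nu\bigr) = \dfrac{n(n+3)}{2}\operatorname{sech}^2(p_\nu)$, and observe that $\nu \to n+1$ forces $\tanh(p_\nu)\to -1$, i.e. $p_\nu \to -\infty$, so $\dfrac{\partial \nu}{\partial p_\nu}\to 0$ --- the exact analogue of $\operatorname{Sigmoid}(o_\alpha)\to 0$ in the univariate case.

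It then remains to show that $\dfrac{\partial \mathcal{L}^{\mathrm{MERN}}}{\partial \nu}$ stays bounded as $\nu \downarrow n+1$, with the remaining network outputs held fixed, so that the chain rule $\dfrac{\partial \mathcal{L}^{\mathrm{MERN}}}{\partial p_\nu} = \dfrac{\partial \mathcal{L}^{\mathrm{MERN}}}{\partial \nu}\dfrac{\partial \nu}{\partial p_\nu}$ tends to $0$. Differentiating $\mathcal{L}^{\mathrm{MERN}}$ term by term in $\nu$: the two $\log\Gamma$ terms give $\tfrac12\psi\!\bigl(\tfrac{\nu-n+1}{2}\bigr) - \tfrac12\psi\!\bigl(\tfrac{\nu+1}{2}\bigr)$, whose argument $\tfrac{\nu-n+1}{2}\to 1$, so both digammas stay finite; $\tfrac{n}{2}\log(r+\nu)$ gives $\tfrac{n}{2(r+\nu)}$; $-\nu\sum_j \ell_j$ gives $-\sum_j \ell_j$; and the log-determinant term gives $\tfrac12\log\bigl|\boldsymbol{L}\boldsymbol{L}^{\top} + \tfrac{1}{r+\nu}\vec{z}\vec{z}^{\top}\bigr| + \tfrac{\nu+1}{2}\operatorname{tr}\!\bigl(M^{-1}M'\bigr)$, where $\vec{z} = \vec{y} - \vec{\mu}_{0}$, $M = \boldsymbol{L}\boldsymbol{L}^{\top} + \tfrac{1}{r+\nu}\vec{z}\vec{z}^{\top}$ and $M' = -\tfrac{1}{(r+\nu)^2}\vec{z}\vec{z}^{\top}$, all finite because $\boldsymbol{L}\boldsymbol{L}^{\top}\succ 0$ for finite $\ell_j$, so $M$ is invertible on a neighbourhood of $\nu=n+1$. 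Hence $\dfrac{\partial \mathcal{L}^{\mathrm{MERN}}}{\partial \nu}$ is bounded and $\dfrac{\partial \mathcal{L}^{\mathrm{MERN}}}{\partial p_\nu}\to 0$: no gradient flows into $p_\nu$ from samples in the high uncertainty area, so multivariate ERN cannot learn from them.

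The main obstacle is exactly this boundedness check --- one must verify that no term of $\mathcal{L}^{\mathrm{MERN}}$ blows up as $\nu\downarrow n+1$. The only term that could in principle misbehave is $\log\Gamma\!\bigl(\tfrac{\nu-n+1}{2}\bigr)$, since $\Gamma$ and $\psi$ have a pole at $0$; but its argument tends to $1$, not $0$, so $\psi$ remains finite and the worry evaporates, while every remaining term is manifestly smooth in $\nu$ near $n+1$ once the other outputs are held fixed and finite. With this in place the argument is structurally identical to the ERN case, and it immediately motivates the remedy: an uncertainty-regularization term for the multivariate model whose $\nu$-derivative is chosen to cancel the $\operatorname{sech}^2(p_\nu)$ factor, in the same way that $\mathcal{L}^{\mathrm{U}}$ cancels $\operatorname{Sigmoid}(o_\alpha)$ for ERN.
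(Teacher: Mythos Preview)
Your argument is correct and follows the same route as the paper: apply the chain rule through the $\tanh$ link, observe that $\nu\to n+1$ forces $p_\nu\to-\infty$ and hence $1-\tanh^2 p_\nu\to 0$, and conclude that $\partial\mathcal{L}^{\mathrm{MERN}}/\partial p_\nu\to 0$. You are in fact more careful than the paper, which simply writes the product and asserts it vanishes without checking that $\partial\mathcal{L}^{\mathrm{MERN}}/\partial\nu$ remains bounded near $\nu=n+1$; your term-by-term verification (in particular noting that the digamma argument tends to $1$, not $0$) closes that gap.
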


\begin{proof}
Given the output of a NN $\left(p_1 \cdots p_m\right)$, we have $p_\nu \in\left(p_1 \cdots p_m\right)$. And $\nu$ is formulated as the following:
\begin{equation}
    \nu = n(n+5)/2+\tanh p_{\nu} \cdot  n(n+3)/2  + 1
\end{equation}
Therefore, the gradient of loss function $\mathcal{L}^{\mathrm{MERN}}$ ($\mathcal{L}^{\mathrm{NLL}}$) with respect to $p_{\nu}$ is given by:
\begin{equation}
\begin{aligned}
\frac{\partial \mathcal{L}^{\mathrm{MERN}}}{\partial p_{\nu}}&=\frac{\partial \mathcal{L}^{\mathrm{NLL}}}{\partial \nu} \frac{\partial \nu}{\partial p_{\nu}} \\
&=\frac{\partial \mathcal{L}^{\mathrm{NLL}}}{\partial \nu} \cdot \frac{n(n+3)}{2} \cdot (1-\tanh^2 p_{\nu})
\end{aligned}
\end{equation}
within HUA, we have:
\begin{equation}
        p_{\nu} \rightarrow -\infty \Rightarrow (1-\tanh^2 p_{\nu}) \rightarrow 0 
\end{equation}
Therefore, we have:
\begin{equation}
    \frac{\partial \mathcal{L}^{\mathrm{MERN}}}{\partial p_{\nu}} = 0
\end{equation}

The gradient with respect to $p_{\nu}$ is zero, there will be no update to $p_{\nu}$. Multivariate ERN cannot learn effectively within HUA.
\end{proof}

Similarly, we propose uncertainty regularization term $\mathcal{L}^{\mathrm{U}}$ to help Multivariate ERN learn from samples within HUA. Since Multivariate ERN uses a different activation function, the proposed $\mathcal{L}^{\mathrm{U}}$ for Multivariate ERN has a different formulation:
\begin{equation}
    \mathcal{L}^{\mathrm{U}} = -\frac{1}{2} \cdot|y-\gamma| \cdot \log(\frac{n^2 + 3n}{n^2 + 4n+1-\nu}-1)
\end{equation}
We can prove the effectiveness of the proposed $\mathcal{L}^{\mathrm{U}}$ for Multivariate ERN.

\begin{theorem}
\textit{Our proposed uncertainty regularization $\mathcal{L}^{\mathrm{U}}$ can help Multivariate ERN learn from samples within HUA.}
\end{theorem}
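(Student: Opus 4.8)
The plan is to mirror the proof of the univariate case (Theorem~2): compute $\partial \mathcal{L}^{\mathrm{U}}/\partial p_{\nu}$ by the chain rule through $\nu$, and show that the Jacobian factor $\partial \nu/\partial p_{\nu} = \tfrac{n(n+3)}{2}(1-\tanh^2 p_{\nu})$ --- the very factor that vanishes in the HUA and kills the $\mathcal{L}^{\mathrm{MERN}}$ gradient (Theorem~3) --- exactly cancels a compensating singularity deliberately built into $\mathcal{L}^{\mathrm{U}}$, leaving a finite, nonzero gradient that scales with the residual $|y-\gamma|$.

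Concretely, first I would write $\nu = \tfrac{n(n+5)}{2} + 1 + \tfrac{n(n+3)}{2}\tanh p_{\nu}$ and substitute it into the denominator $n^2+4n+1-\nu$ that appears inside $\mathcal{L}^{\mathrm{U}}$. A short computation should give the identity $n^2+4n+1-\nu = \tfrac{n(n+3)}{2}(1-\tanh p_{\nu})$, hence $\dfrac{n^2+3n}{n^2+4n+1-\nu} - 1 = \dfrac{2}{1-\tanh p_{\nu}} - 1 = \dfrac{1+\tanh p_{\nu}}{1-\tanh p_{\nu}}$. Rewriting $\tanh p_{\nu}$ in terms of exponentials collapses this ratio to $e^{2p_{\nu}}$, so that $\mathcal{L}^{\mathrm{U}} = -\tfrac{1}{2}|y-\gamma|\log\!\left(e^{2p_{\nu}}\right) = -|y-\gamma|\,p_{\nu}$. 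Differentiating directly then yields $\partial \mathcal{L}^{\mathrm{U}}/\partial p_{\nu} = -|y-\gamma|$, which is independent of $p_{\nu}$ and therefore does \emph{not} vanish in the HUA limit $p_{\nu}\to-\infty$ (where $\tanh p_{\nu}\to-1$ and $\nu\to n+1$), exactly paralleling the univariate conclusion.

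For the write-up I would actually present the chain-rule form, to make the cancellation mechanism visible: compute $\partial \mathcal{L}^{\mathrm{U}}/\partial \nu$ (a logarithmic derivative carrying a factor that blows up like $1/(1-\tanh p_{\nu})$ as $p_{\nu}\to-\infty$), multiply by $\partial\nu/\partial p_{\nu} = \tfrac{n(n+3)}{2}(1-\tanh p_{\nu})(1+\tanh p_{\nu})$, and observe that the $(1-\tanh p_{\nu})$ factors cancel, leaving $-|y-\gamma|$. I would also note that the constraint $\nu > n+1$ keeps the argument of the logarithm in $\mathcal{L}^{\mathrm{U}}$ strictly positive, so the regularizer is well defined on the relevant domain, and remark that (as in Theorem~2) the surviving gradient scales with $|y-\gamma|$, which is what drives samples out of the HUA.

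The only genuine obstacle is bookkeeping: one must carefully verify the two algebraic identities $n^2+4n+1-\nu = \tfrac{n(n+3)}{2}(1-\tanh p_{\nu})$ and $\tfrac{n^2+3n}{n^2+4n+1-\nu} = \tfrac{2}{1-\tanh p_{\nu}}$, since these are precisely the relations that make the constants $n(n+5)/2+1$ and $n(n+3)/2$ in $\mathcal{L}^{\mathrm{U}}$ reverse-engineered to produce a clean gradient. Beyond that, the argument is the same differentiation pattern already used for Theorems~1--3, so no new idea is required.
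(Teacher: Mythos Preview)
Your proposal is correct. The chain-rule presentation you describe---computing $\partial\mathcal{L}^{\mathrm{U}}/\partial\nu$, multiplying by $\partial\nu/\partial p_{\nu}=\tfrac{n(n+3)}{2}(1-\tanh^2 p_{\nu})$, and exhibiting the cancellation that leaves $-|y-\gamma|$---is exactly the route the paper takes, with the paper expressing the $(1-\tanh^2 p_{\nu})$ factor back in terms of $\nu$ via $(1-\tanh^2 p_{\nu})=\dfrac{-4(\nu-n-1)(\nu-n^2-4n-1)}{(n^2+3n)^2}$ so that the cancellation happens entirely in $\nu$-variables.

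Your preliminary simplification, however, is a genuinely cleaner alternative the paper does not state: substituting $\nu$ and reducing the log argument to $e^{2p_{\nu}}$ shows directly that $\mathcal{L}^{\mathrm{U}}=-|y-\gamma|\,p_{\nu}$, after which the derivative is immediate and the HUA-independence is obvious by inspection. This makes transparent \emph{why} the constants in $\mathcal{L}^{\mathrm{U}}$ were chosen (it is just $-|y-\gamma|\,p_{\nu}$ rewritten through the activation), whereas the paper's purely $\nu$-level computation obscures this and requires tracking several rational factors before they collapse. Either presentation suffices; yours is shorter and more illuminating.
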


\begin{proof}
The gradient of the proposed $\mathcal{L}^{\mathrm{U}}$ with respect to $p_{\nu}$ is given by:
\begin{equation}
\begin{aligned}
\frac{\partial \mathcal{L}^{\mathrm{U}}}{\partial p_{\nu}} &= \frac{\partial \mathcal{L}^{\mathrm{U}}}{\partial \nu} \frac{\partial \nu}{\partial p_{\nu}}     \\
&=-| y-\gamma | \cdot \frac{1}{2(n^2+3n)} \\
&\cdot \frac{-(n^2 + 3n)^2}{(\nu-n-1)(\nu-n^2-4n-1)} \\
&\cdot \frac{n^2+3n}{2} \cdot (1-\tanh^2 p_{\nu}) \\
&=-| y-\gamma | \cdot  \frac{1}{2(n^2+3n)} \\
&\cdot \frac{-(n^2 + 3n)^2}{(\nu-n-1)(\nu-n^2-4n-1)}\\
&\cdot \frac{n^2+3n}{2} \cdot \frac{-4(\nu-n-1)(\nu-n^2-4n-1)}{(n^2 + 3n)^2} \\
&=-| y-\gamma | 
\end{aligned}
\end{equation}

\end{proof}

The proposed regularization term $\mathcal{L}^{\mathrm{U}}$ guarantees a non-zero gradient for Multivariate ERN in the HUA. Therefore, the loss function for uncertainty regularized Multivariate ERN is formulated as:
\begin{equation}
    \mathcal{L}=\mathcal{L}^{\mathrm{NLL}}+\lambda_1 \mathcal{L}^{\mathrm{U}}
\end{equation}
where $\lambda_1$ is settable hyperparameter.

While the two terms have different formulations, they share a common intuition. We identify the zero gradient problem arising from the activation function and introduce a term to circumvent zero gradients, simultaneously increasing $\alpha$.
This adjustment guides the training process away from this problematic area.
Our mathematical analysis confirms these terms effectively achieve our objective.

The above theoretical analysis reveals that the learning deficiency is not exclusive to ERN~\cite{NEURIPS2020_aab08546}; it also manifests in other evidential models~\cite{meinert2021multivariate} that employ different prior distributions.

\section{Experiments}
In this section, we first conduct experiments under both synthetic and real-world datasets. For each dataset, we investigate whether the methods fail to learn from samples within and outside HUA. Moreover, we perform additional experiments to demonstrate that even the Multivariate ERN, which employs distinct prior distributions, struggles to learn effectively within HUA. To compare performance, we use baselines including ERN~\cite{NEURIPS2020_aab08546} ($\mathcal{L}^{\mathrm{NLL}}+\lambda \mathcal{L}^{\mathrm{R}}$), and NLL-ERN ($\mathcal{L}^{\mathrm{NLL}}$). For experiments within HUA, we initialize the model within HUA by setting bias in the activation layer. 
Please refer to Appendix~\ref{appendix_2} for details about experimental setups and experiments about the sensitivity of hyperparameters.

\subsection{Performance on Cubic Regression Dataset}
To highlight the limitations of ERN, we compare its performance with our proposed \ours on cubic regression dataset~\cite{NEURIPS2020_aab08546} within HUA. Following~\cite{NEURIPS2020_aab08546}, we train models on $y=x^3+\epsilon$, where $\epsilon \sim \mathcal{N}(0,3)$. We conduct training over the interval $x \in[-4,4]$, and perform testing over $x \in \left[-6,-4) \cup (4,6\right]$.

\begin{figure}[t!]
\centering
\includegraphics[width=0.9\columnwidth]{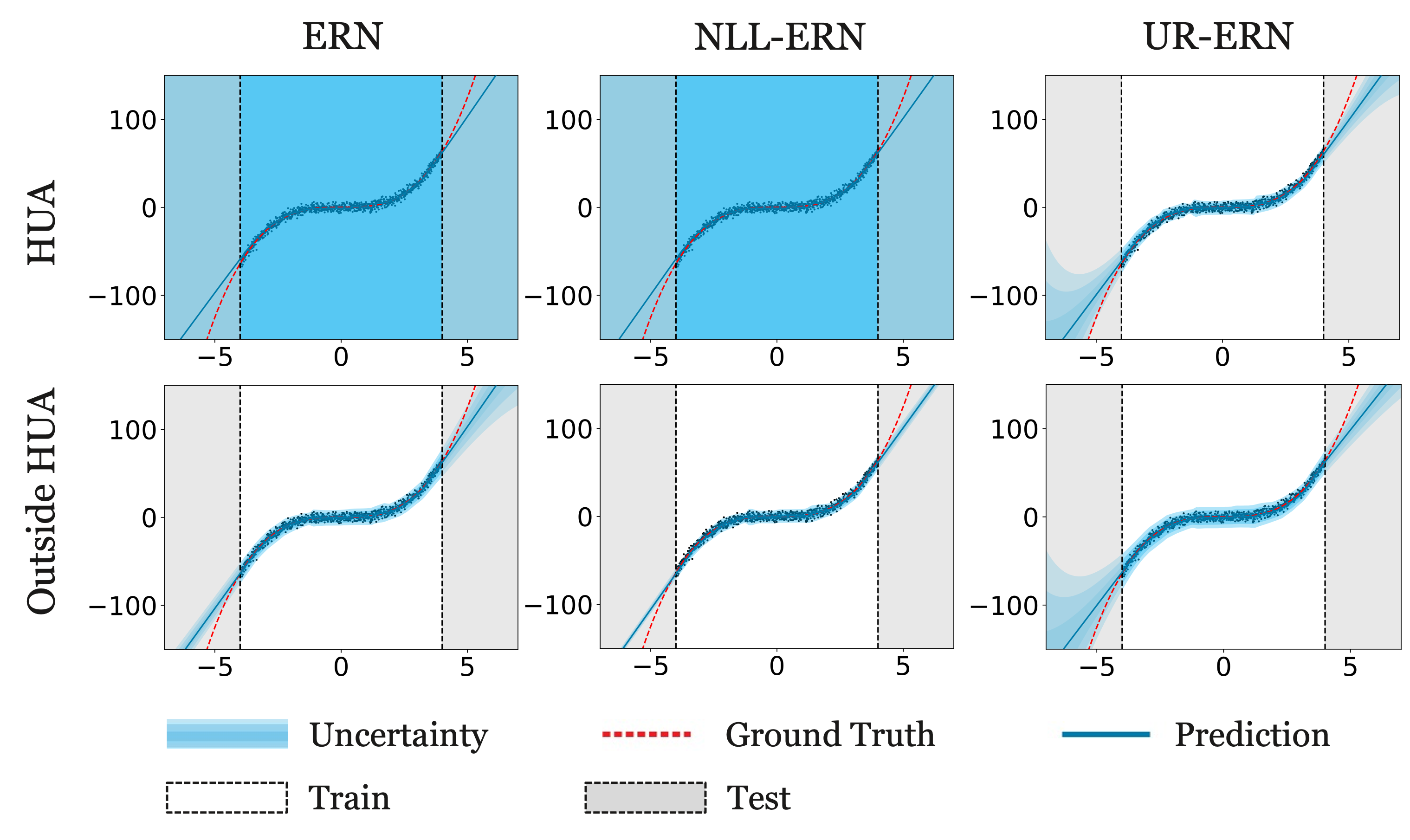} 
\caption{Uncertainty estimation on Cubic Regression. The blue shade represents prediction uncertainty. An effective evidential model would cause the blue shade to cover the distance between the predicted value and the ground truth precisely. Up: Comparison of model performance within HUA. Down: Comparison of model performance outside HUA. \ours can cover the ground truth precisely under both within HUA and outside HUA.}
\label{fig:toy_dataset}
\end{figure}

\subsubsection{Evaluation Metrics}
Our proposed regularization is mainly designed to help the model effectively update the parameter $\alpha$ within HUA. This is essential because, as our theoretical analysis has shown, if the model cannot properly update $\alpha$, the uncertainty prediction will become unreasonably high. Therefore, we have chosen uncertainty prediction as our evaluation metric. We visualize the experimental results of uncertainty estimation along with ground truth in Figure~\ref{fig:toy_dataset} and the uncertainty is represented by the blue shade. An accurate prediction in uncertainty would lead the blue shade to cover the distance between the predicted value and the ground truth precisely.

\subsubsection{Cubic Regression within HUA}
As illustrated in Figure~\ref{fig:toy_dataset}, where the blue shade represents the uncertainty predicted by the models, ERN encounters difficulties in updating parameters in the HUA, resulting in high uncertainty predictions across the dataset. In contrast, the proposed \ours maintains its training efficiency, effectively mitigating this issue. These observations validate our theoretical analysis, demonstrating the effectiveness of our proposed method. 
\subsubsection{Cubic Regression outside HUA}
We extend our investigation to assess the performance of these methods under standard conditions (outside the HUA). Figure~\ref{fig:toy_dataset} illustrates that the inclusion of the term $\mathcal{L}^{\mathrm{R}}$ in $\mathcal{L}^{\mathrm{ERN}}$ contributes to more accurate uncertainty predictions, a result that aligns with the findings in~\citeauthor{NEURIPS2020_aab08546}. Moreover, the proposed \ours not only performs robustly in the HUA but also exhibits superior performance compared to the ERN outside the HUA. These observations further demonstrate the effectiveness of our method.

\begin{figure*}[t!]
  \centering
  \subfloat[Performance of model within HUA of Depth Estimation]{
    \includegraphics[width=0.5\linewidth]{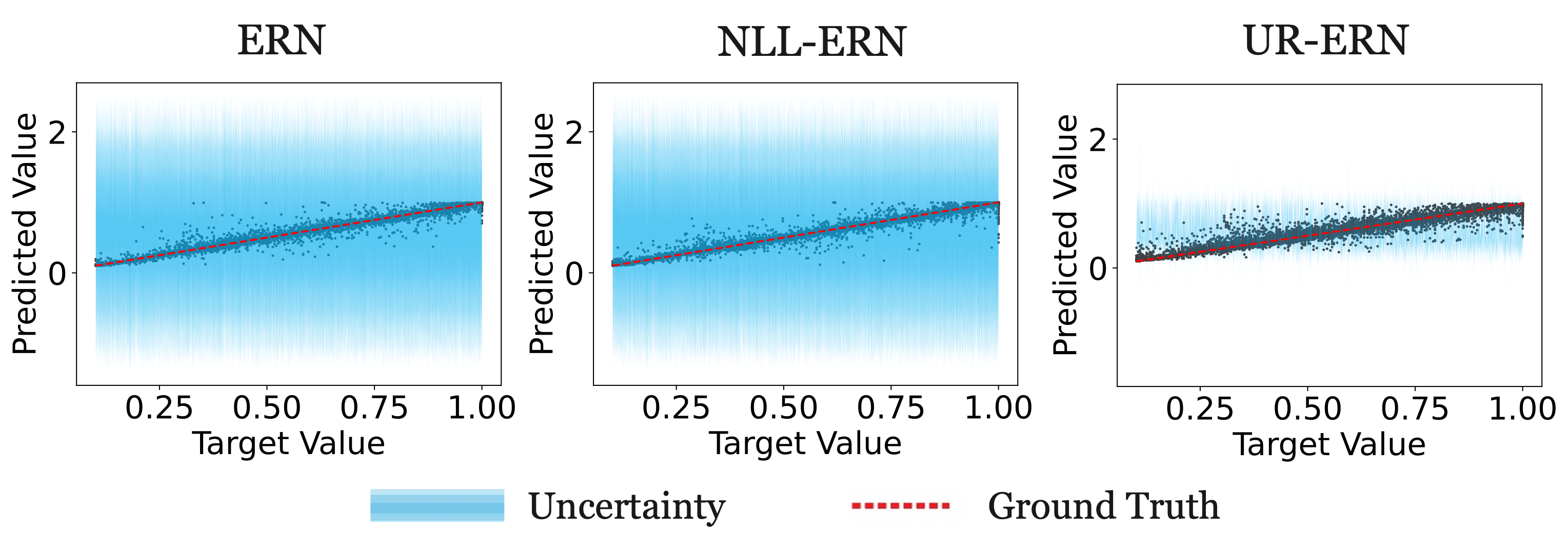}
  }
  \subfloat[RMSE with confidence]{
    \includegraphics[width=0.22\linewidth]{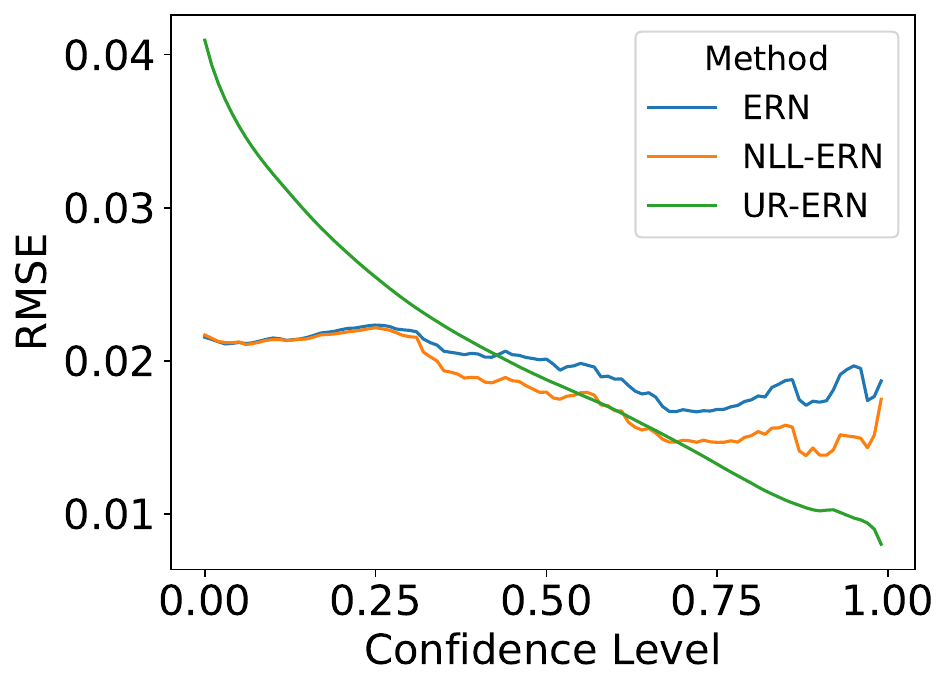}
  }
  \subfloat[Uncertainty calibration]{
    \includegraphics[width=0.22\linewidth]{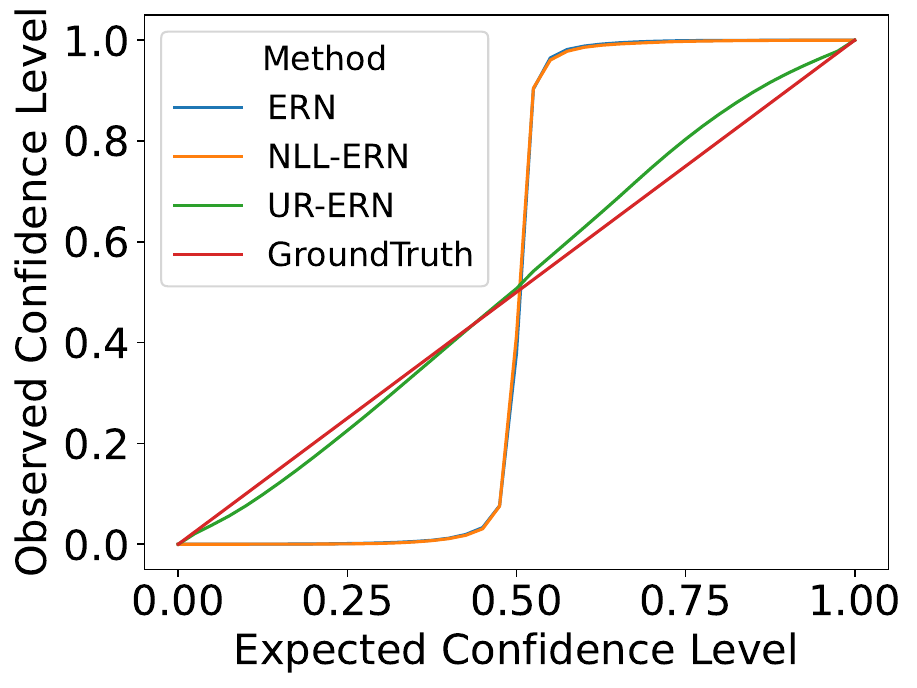}
  }
  \caption{Uncertainty prediction of Depth Estimation within HUA. (a) The blue shade represents prediction uncertainty. A good estimation of uncertainty should cover the gap between prediction and ground truth exactly. (b) Root Mean Square Error (RMSE) at various confidence levels. The evidential model with a larger confidence level should have a lower RMSE. (c) Uncertainty calibration calculated following~\citeauthor{kuleshov2018accurate}, the ideal curve is $y=x$. The calibration errors are 0.2261, 0.2250, and 0.0243 for ERN, NLL-ERN and UR-ERN, respectively.}
  \label{fig:depth_HUA}
\end{figure*}

\begin{figure*}[t!]
  \centering
  \includegraphics[width=0.95\linewidth]{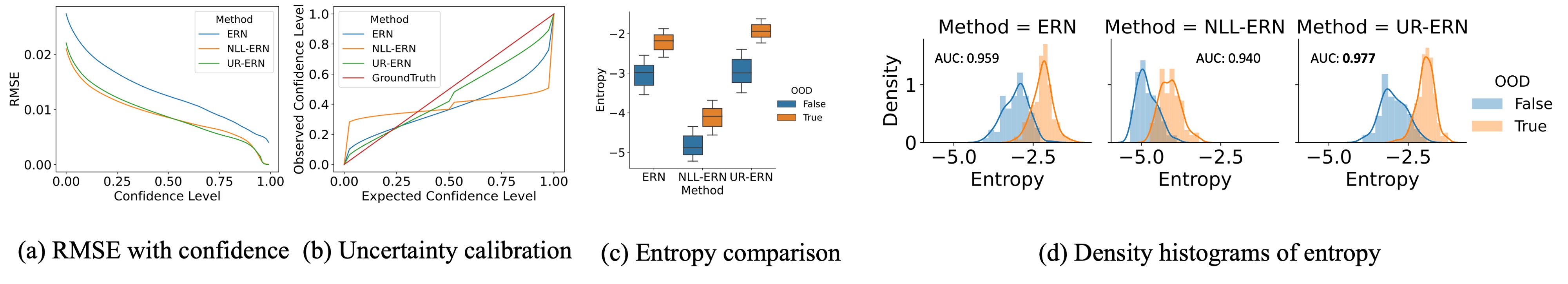} 
  \caption{Uncertainty prediction of Depth Estimation outside HUA. (a) RMSE at various confidence levels. (b) Uncertainty calibration (ideal: $y=x$). The calibration errors are 0.1366, 0.1978, and 0.0289 for ERN, NLL-ERN and UR-ERN, respectively. (c) and (d) show OOD experimental results. (c) Entropy comparisons for different methods. (d) Density histograms of entropy. Entropy is calculated from $\sigma$, directly related to uncertainty. A good evidential model should be able to distinguish OOD data.}
  \label{fig:depth_outside_HUA}
\end{figure*}

\subsection{Performance on Monocular Depth Estimation}

We further evaluate the performance of our proposed \ours and ERN on more challenging real-world tasks. Monocular depth estimation is a task in computer vision aiming to predict the depth directly from an RGB image. We choose the NYU Depth v2 dataset~\cite{silberman2012indoor} for experiments.
For each pixel, there is a corresponding depth target. Following previous practice~\cite{NEURIPS2020_aab08546}, we train U-Net \cite{ronneberger2015u} style neural network as the backbone to learn evidential parameters. Similar to the previous section, we compare the performance of our \ours against ERN within HUA and outside HUA. Limited by space, additional experimental results are summarized in Appendix~\ref{appendix_2}.

\subsubsection{Evaluation Metrics}
For uncertainty in depth estimation, we first explore whether the models can correctly update parameters within HUA. Similarly, we choose the value of uncertainty as the evaluation metric. Similar to cubic regression, the blue shade in Figure~\ref{fig:depth_HUA}(a) depicts  predicted uncertainty. 
And the models that cannot learn from samples within HUA will exhibit excessively large blue shade areas, resulting from their high uncertainty prediction across the test set. 

Following existing works~\citeauthor{NEURIPS2020_aab08546,kuleshov2018accurate}, we also use cutoff curves and calibration curves to compare the performance of uncertainty estimation. Inspired by previous work~\cite{NEURIPS2020_aab08546}, we further test how the models perform when faced with OOD data. An effective evidential model should predict high uncertainty for OOD data and can distinguish the OOD data. The OOD experimental setup is the same as~\citeauthor{NEURIPS2020_aab08546} for comparison.

\subsubsection{Monocular Depth Estimation within HUA}
As illustrated in Figure~\ref{fig:depth_HUA}, ERN with $\mathcal{L}^{\mathrm{R}}$ or not, struggles to update parameters effectively within the HUA, leading to suboptimal uncertainty estimation. This constraint forms a significant impediment to effective learning from particular samples. In contrast, the proposed \ours successfully navigates this challenge, demonstrating the capacity to learn from these specific samples and to efficiently estimate uncertainty, mirroring the behavior observed in normal regions.



Figure~\ref{fig:depth_HUA} presents a comparison of model performances as pixels possessing uncertainty beyond specific thresholds are excluded. The proposed \ours demonstrates robust behavior, characterized by a consistent reduction in error corresponding to increasing levels of confidence. In addition to the performance comparison, Figure~\ref{fig:depth_HUA} provides an assessment of the calibration of our uncertainty estimates. The calibration curves, computed following the methodology described in previous work~\cite{kuleshov2018accurate}, should ideally follow $y=x$ for accurate representation. The respective calibration errors for each model are also shown.

\subsubsection{Monocular Depth Estimation outside HUA}
We also look at how the models perform outside the HUA. Figure~\ref{fig:depth_outside_HUA} visualizes the comparison of how the models can estimate uncertainty in depth estimation outside HUA. The proposed \ours has a lower Root Mean Square Error (RMSE) for most confidence levels than the competing models. And the calibration curve is closer to the ideal curve $y=x$ than any competing model, showing again the effectiveness of the proposed \ours.

For OOD experiments, the proposed \ours can distinguish OOD data better than the competing models. The above experiments reveal that the proposed regularization can not only be effective at guiding the model to get out of HUA, but it also performs well outside HUA. 


\subsection{Extension to Different ERN Variants}
Our theoretical findings reveal that the performance issues within HUA extend beyond ERN. Other evidential models, even those utilizing different prior distributions, similarly exhibit poor performance within this challenging region. 

Following the theoretical analysis in the previous section, we compare the performance of models in the context of Multivariate Deep Evidential Regression~\cite{meinert2021multivariate}. Following the experimental setup in~\cite{meinert2021multivariate}, we conduct the multivariate experiment and predict $(x, y) \in \mathbb{R}^2$ given $t \in \mathbb{R}$, where $x$ and $y$ being the features of the data sample given input $t$ with the following definition:
\begin{equation}
x=(1+\epsilon) \cos t, \quad 
y=(1+\epsilon) \sin t,
\end{equation}
and the distribution of $t$ is formulated as following:
\begin{equation}
t \sim \begin{cases}1-\frac{\zeta}{\pi} & \text { if } \zeta \in[0, \pi] \\ \frac{\zeta}{\pi}-1 & \text { if } \zeta \in(\pi, 2 \pi] \\ 0 & \text { else }\end{cases}
\end{equation}
where $\zeta \in[0,2 \pi]$ is uniformly distributed and $\epsilon \sim \mathcal{N}(0,0.1)$ is drawn from a normal distribution. Under this setting, the uncertainty is calculated as $\frac{\boldsymbol{L} \boldsymbol{L}^{\top}}{\nu-3}$. When $\nu\rightarrow3$, the corresponding uncertainty will be infinite across the dataset. Similar to previous experiments, we initialize the model within HUA by setting bias in the activation layer (See details in Appendix~\ref{appendix_2}).


Figure~\ref{fig:multivariate}(a) shows that the Multivariate ERN struggles to update the parameter $\nu$, resulting in unreasonably high uncertainty estimations (see Appendix~\ref{appendix_2} for additional experimental results). Consistent with previous sections, the proposed \ours in Figure~\ref{fig:multivariate}(b) does not encounter this issue, effectively learning from samples within HUA and providing reasonable uncertainty predictions. This validates our theoretical findings, demonstrating that evidential models, including but not limited to ERN, face challenges in the HUA when utilizing specific activation functions to ensure non-negative values. As a solution to this problem, our proposed uncertainty regularization term $\mathcal{L}^{\mathrm{U}}$ can be universally applied to these methods, enabling them to avoid the issue of zero gradients in the HUA.

\begin{figure}[h!]
    \centering
    \subfloat[Multivariate ERN]{
        \includegraphics[width=0.45\linewidth]{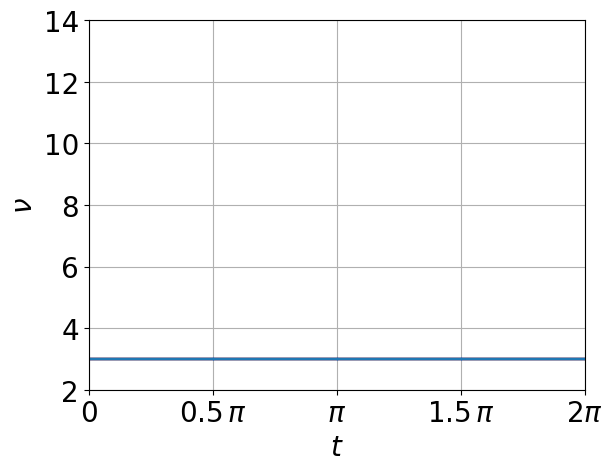}
    }
    \subfloat[\ours]{
        \includegraphics[width=0.45\linewidth]{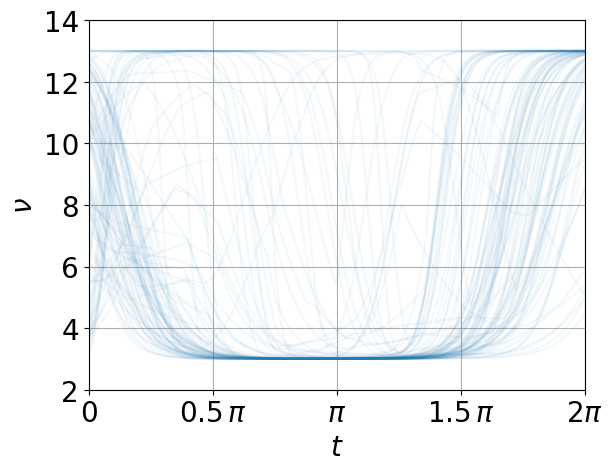}
    }
    \caption{Prediction of parameter $\nu$ in Multivariate ERN and our proposed \ours. Uncertainty ($\frac{\boldsymbol{L} \boldsymbol{L}^{\top}}{\nu-3}$) will be infinite if $\nu$ is close to 3, indicating the evidential model fails to properly estimate the uncertainty of predictions.}
    \label{fig:multivariate}
\end{figure}

\section{Related Works}
\subsubsection{Uncertainty Estimation in Deep Learning} 
Developing a trustworthy Deep Learning (DL) model requires an accurate estimation of prediction uncertainty. Ensemble methods~\cite{pearce2020uncertainty,lakshminarayanan2017simple} use multiple networks for uncertainty quantification and thus are computationally expensive due to the need for more parameters. Bayesian neural networks (BNNs)~\cite{gal2016dropout,wilson2020bayesian,blundell2015weight}, treating neural network weights as random variables, capture weight distribution rather than point estimates. The introduction of Dropout to BNNs during inference~\cite{gal2016dropout} approximates Bayesian inference in deep Gaussian processes but also increases computational costs due to complex sampling techniques.

\subsubsection{Evidential Deep Learning}
Evidential Deep Learning (EDL)~\cite{sensoy2018evidential,NEURIPS2020_aab08546,malinin2018predictive} is a relatively recent method for uncertainty estimation in deep learning, using a conjugate higher-order evidential prior to understand fine-grained uncertainties. These models train the neural network to predict distribution parameters that capture both the target variable and its associated uncertainty. Dirichlet prior is introduced for evidential classification~\cite{sensoy2018evidential}. And NIG prior is introduced for evidential regression~\cite{NEURIPS2020_aab08546}.~\citet{meinert2021multivariate} further utilize NIW prior for multivariate regression.~\citet{pandey2023learn} first observed convergence issues in evidential models for classification, noting their incapacity to learn from certain samples. To tackle this, they introduced novel evidence regularization. However, the convergence analysis of ERN for regression tasks remains unexplored.

\section{Conclusion}
In this paper, we define High Uncertainty Area (HUA) for evidential regression models and identify a key limitation where the gradient of samples within the HUA diminishes to zero. This makes the evidential models unable to learn from these samples. To combat this issue, we introduce a novel regularization term, and our experiments validate the effectiveness of this solution.

\section{Acknowledgments}
This study is partially supported by NSF award (IIS 2045848, IIS 1837956, IIS 2319450, and IIS 2153311).

\bibliography{aaai23}
\clearpage
\appendix

\section{Additional proof for theorem~\ref{proof1}}
\label{appendix_1}
\textbf{Theorem~\ref{proof1}.} \textit{ERN cannot learn from samples in high uncertainty area.}

\begin{proof}
Consider input $X$ and the corresponding label $y$. We use $\boldsymbol{o}=(o_{\gamma}, o_{v}, o_{\alpha}, o_{\beta})$ to denote the output of $f_{\boldsymbol{\theta}}(X)$.

\textbf{Case I}: $\operatorname{ReLU}(\cdot)$ activation function to get $\alpha$, therefore we have:
\begin{equation}
\begin{aligned}
\alpha&=\operatorname{ReLU}(o_{\alpha}) + 1  \\
      &= \max(0,o_{\alpha})+1 
\end{aligned}
\end{equation}
And,
\begin{equation}
    \frac{\partial \alpha}{\partial o_{\alpha}}=
    \begin{cases}1 & \text { if } \quad o_{\alpha}>0 \\ 
    0 & \text { otherwise }
    \end{cases}
\end{equation}
So the gradient of NLL loss with respect to $o_{\alpha}$ is given by:
\begin{equation}
\begin{aligned}
\frac{\partial \mathcal{L}^{\mathrm{NLL}}}{\partial o_\alpha} &= \frac{\partial \mathcal{L}^{\mathrm{NLL}}}{\partial \alpha} \frac{\partial \alpha}{\partial o_{\alpha}}     \\
&=[\log(1+\frac{\nu(\gamma-y)^2}{2\beta(\nu+1)}) +\psi(\alpha) \\
&- \psi(\alpha+0.5)] \cdot \frac{\partial \alpha}{\partial o_{\alpha}}
\end{aligned}
\end{equation}
where $\psi(\cdot)$ denotes the digamma function.
For a sample in high uncertainty area, we have:
\begin{equation}
    \alpha \rightarrow 1 \Rightarrow o_\alpha <0 \Rightarrow \frac{\partial \alpha}{\partial o_{\alpha}}=0
\end{equation}

\textbf{Case II}: $\operatorname{exp}(\cdot)$ activation function to get $\alpha$, therefore we have:
\begin{equation}
\alpha=\operatorname{exp}(o_{\alpha}) + 1  
\end{equation}
And,
\begin{equation}
    \frac{\partial \alpha}{\partial o_{\alpha}}=\exp(o_{\alpha})
\end{equation}
So the gradient of NLL loss with respect to $o_{\alpha}$ is given by:
\begin{equation}
\begin{aligned}
\frac{\partial \mathcal{L}^{\mathrm{NLL}}}{\partial o_\alpha} &= \frac{\partial \mathcal{L}^{\mathrm{NLL}}}{\partial \alpha} \frac{\partial \alpha}{\partial o_{\alpha}}     \\
&=[\log(1+\frac{\nu(\gamma-y)^2}{2\beta(\nu+1)}) +\psi(\alpha) \\
&- \psi(\alpha+0.5)] \cdot \exp(o_{\alpha})
\end{aligned}
\end{equation}
where $\psi(\cdot)$ denotes the digamma function.

For a sample in high uncertainty area, we have:
\begin{equation}
    \alpha \rightarrow 1 \Rightarrow o_\alpha \rightarrow-\infty \Rightarrow \exp\left(o_\alpha\right) \rightarrow 0
\end{equation}

\textbf{Case III}: $\operatorname{SoftPlus}(\cdot)$ activation function to get $\alpha$, therefore we have:
\begin{equation}
\begin{aligned}
\alpha&=\operatorname{SoftPlus}(o_{\alpha}) + 1  \\
      &= \log(\exp(o_{\alpha})+1)+1 
\end{aligned}
\end{equation}
And,
\begin{equation}
    \frac{\partial \alpha}{\partial o_{\alpha}}=\operatorname{Sigmoid}\left(o_\alpha\right)
\end{equation}
So the gradient of NLL loss with respect to $o_{\alpha}$ is given by:
\begin{equation}
\begin{aligned}
\frac{\partial \mathcal{L}^{\mathrm{NLL}}}{\partial o_\alpha} &= \frac{\partial \mathcal{L}^{\mathrm{NLL}}}{\partial \alpha} \frac{\partial \alpha}{\partial o_{\alpha}}     \\
&=[\log(1+\frac{\nu(\gamma-y)^2}{2\beta(\nu+1)}) +\psi(\alpha) \\
&- \psi(\alpha+0.5)] \cdot \operatorname{Sigmoid}\left(o_\alpha\right)
\end{aligned}
\end{equation}
where $\psi(\cdot)$ denotes the digamma function.

For a sample in high uncertainty area, we have:
\begin{equation}
    \alpha \rightarrow 1 \Rightarrow o_\alpha \rightarrow-\infty \Rightarrow \operatorname{Sigmoid}\left(o_\alpha\right) \rightarrow 0
\end{equation}

Based on the above analysis of \textbf{Case I}, \textbf{Case II} and \textbf{Case III}, for training samples in high uncertainty areas:
\begin{equation}
    \frac{\partial \mathcal{L}^{\mathrm{NLL}}}{\partial o_\alpha} = 0
\end{equation}

And the gradient of $\mathcal{L}^{\mathrm{R}}=|y-\gamma| \cdot(2 v+\alpha)$ with respect to $o_{\alpha}$ is given by:
\begin{equation}
\begin{aligned}
    \frac{\partial \mathcal{L}^{\mathrm{R}}}{\partial o_\alpha}&=\frac{\partial \mathcal{L}^{\mathrm{R}}}{\partial \alpha} \frac{\partial \alpha}{\partial o_\alpha} \\
    &= |y-\gamma| \cdot \frac{\partial \alpha}{\partial o_\alpha}
\end{aligned}
\end{equation}
Similarly, we have:
\begin{equation}
    \frac{\partial \mathcal{L}^{\mathrm{R}}}{\partial o_\alpha} = 0
\end{equation}
And $\mathcal{L}^{\mathrm{ERN}}=\mathcal{L}^{\mathrm{NLL}}+\lambda \mathcal{L}^{\mathrm{R}}$, therefore we have:
\begin{equation}
\begin{aligned}
\frac{\partial \mathcal{L}^{\mathrm{ERN}}}{\partial o_\alpha} &= \frac{\partial \mathcal{L}^{\mathrm{NLL}}}{\partial o_\alpha} +\lambda \frac{\partial \mathcal{L}^{\mathrm{R}}}{\partial o_\alpha} \\
&=0
\end{aligned}
\end{equation}
\end{proof}

Since the gradient of the loss function with respect to $o_{\alpha}$ is zero, there won't be any update on $\alpha$ from such samples. The model fails to learn from samples in high uncertainty area.

\section{Details about experimental setup and additional experiments}
\label{appendix_2}

\subsection{Experimental Setup}
\label{appendix_21}
\subsubsection{Cubic Regression}For Cubic Regression, the problem setup is the same as~\citeauthor{NEURIPS2020_aab08546}.
The training set is composed of examples derived from the cubic equation $y=x^3+\epsilon$, where the error term $\epsilon$ follows a normal distribution $\epsilon \sim \mathcal{N}(0,3)$. We conduct training over the interval $x \in[-4,4]$, and perform testing over $x \in[-6,-4) \cup(4,6]$. All models consisted of 100 neurons with 3 hidden layers and were trained to convergence. The coefficient for $\mathcal{L}^{\mathrm{R}}$ in $\mathcal{L}^{\mathrm{ERN}}$ is $\lambda=0.01$. All models were trained with the Adam optimizer $\eta=5 \mathrm{e}-3$ and a batch size of 128. For experiments within HUA, we give a large negative bias to the activation layer to make the samples fall within HUA at the beginning of training.

\begin{figure}
\centering
\includegraphics[width=0.9\columnwidth]{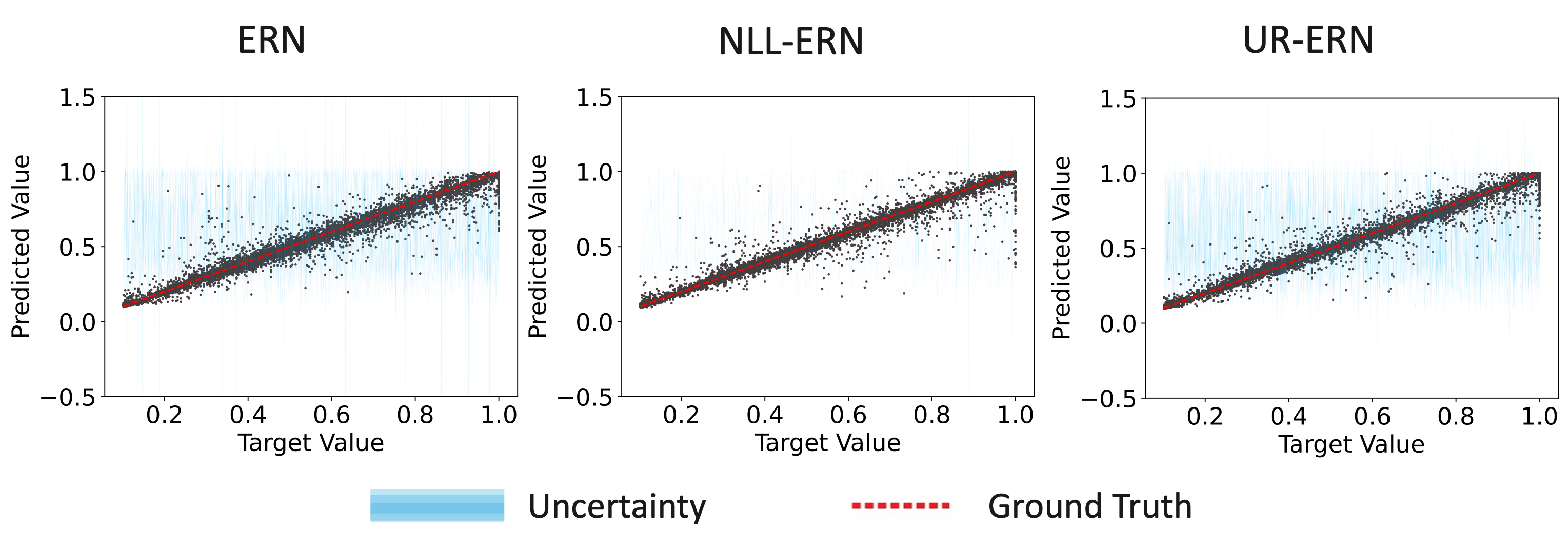} 
\caption{Performance of model outside HUA of Depth Estimation. The blue shade represents prediction uncertainty. A good estimation of uncertainty should cover the gap between prediction and ground truth exactly. \ours performs stably well compared with ERN and NLL-ERN both within HUA and outside HUA.}
\label{depth_outside_HUA_appen}
\end{figure}


\subsubsection{Depth Estimation}For depth estimation, we also follow the setup of \citeauthor{NEURIPS2020_aab08546}. Depth estimation is conducted on  NYU-Depth-v2 dataset~\cite{silberman2012indoor}. Each image scan's missing depth holes were filled using the Levin Colorization method, and the resulting depth map was inverted to be proportional to disparity, as commonly practiced in depth learning literature. 
The resulting images were saved and used for training a U-Net \cite{ronneberger2015u} backbone model with five convolutional and pooling blocks. The dataset was randomly split into training, validation, and test sets (80-10-10). The input images had a shape of $(160,128)$ with 3 feature maps (RGB), while the target had a single disparity map. 
The training hyperparameters included a batch size of 32, Adam optimization with learning rate $5 \times 10^{-5}$, over 60000 iterations, and $\lambda=0.1$ for ERN. The best model by validation set RMSE was saved for testing. For experiments within HUA, we give a large negative bias to the activation layer to make the samples fall within HUA at the beginning of training.

For out-of-distribution experiments, We use images from ApolloScape \cite{huang2018apolloscape}, an OOD dataset for outdoor driving.

\begin{figure}
    \centering
        \subfloat[Ground Truth]{
        \includegraphics[width=0.45\linewidth]{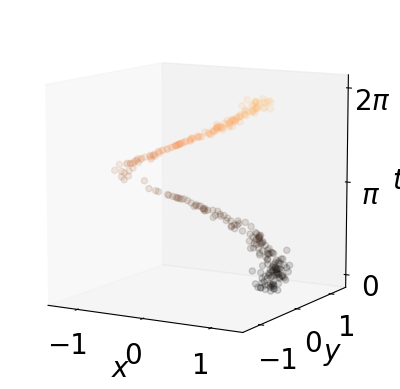}
    }
    \subfloat[Ground Truth ($xy$-plane)]{
        \includegraphics[width=0.45\linewidth]{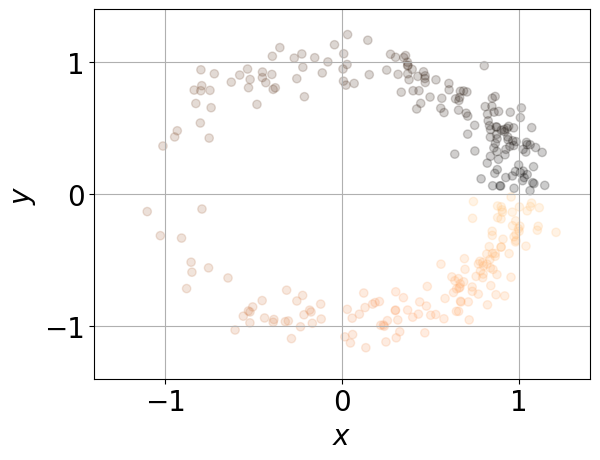}
    } \\
        \subfloat[Multivariate ERN]{
        \includegraphics[width=0.45\linewidth]{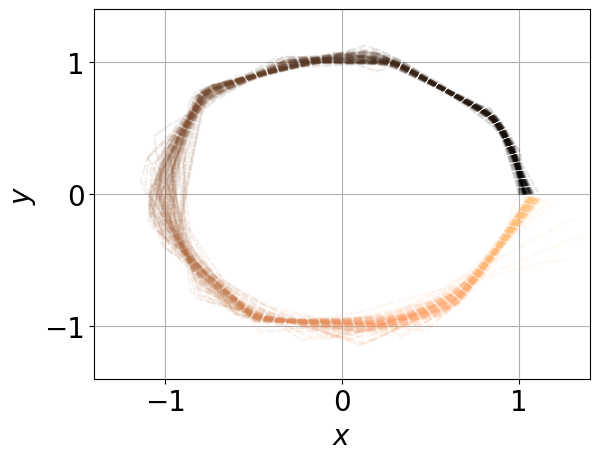}
    }
    \subfloat[\ours]{
        \includegraphics[width=0.45\linewidth]{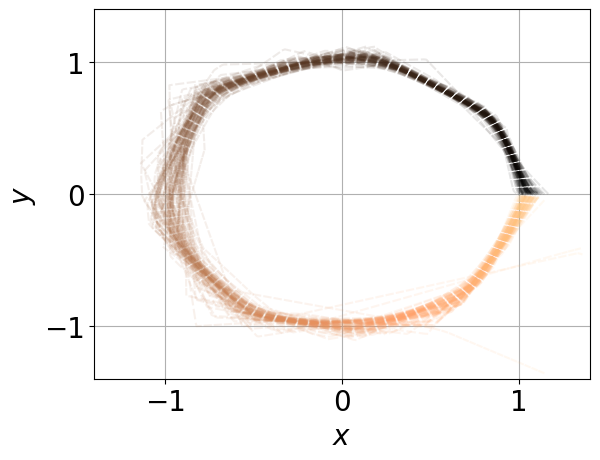}
    }
    \caption{Prediction of regression target. The value of $t$ is color coded. The larger value of $t$ corresponds to lighter color in the $xy$-plane figures. Besides estimating uncertainty, \ours performs stably well compared with Multivariate ERN in terms of predicting regression targets.}
    \label{fig:multivariate_appen}
\end{figure}

\subsubsection{Multivariate ERN}
For experiments of Multivariate ERN, we follow the setup of \citeauthor{meinert2021multivariate}. The target is to predict $(x, y) \in \mathbb{R}^2$ given $t \in \mathbb{R}$, where $x$ and $y$ being the features of the data sample given input $t$ with the following definition:
\begin{equation}
x=(1+\epsilon) \cos t, \quad 
y=(1+\epsilon) \sin t,
\end{equation}
and the distribution of $t$ is formulated as following:
\begin{equation}
t \sim \begin{cases}1-\frac{\zeta}{\pi} & \text { if } \zeta \in[0, \pi] \\ \frac{\zeta}{\pi}-1 & \text { if } \zeta \in(\pi, 2 \pi] \\ 0 & \text { else }\end{cases}
\end{equation}
where $\zeta \in[0,2 \pi]$ is uniformly distributed and $\epsilon \sim \mathcal{N}(0,0.1)$ is drawn from a normal distribution. We fit a distribution to 300 data points using a small neural network (NN) with one input neuron, two hidden layers of 32 neurons with Rectified Linear Unit activation, and six output neurons. The output $\vec{p} \in \mathbb{R}^6$ is transformed into the parameters of the evidential distribution.:
\begin{equation}
\begin{gathered}
\vec{\mu}=\left(\begin{array}{c}
p_1 \\
p_2
\end{array}\right), \quad \boldsymbol{L}=\left(\begin{array}{cc}
\exp \left\{p_3\right\} & 0 \\
p_4 & \exp \left\{p_5\right\}
\end{array}\right), \\
\nu=8+5 \tanh p_6,
\end{gathered}
\end{equation}
where an exponential function is used to constrain the diagonal elements of $\boldsymbol{L}$ to be strictly positive and the transformation of $\nu$ corresponds to the required lower bound $ \nu>n+1=3$.
All the experiments regarding Multivariate ERN are within HUA, we give a large negative bias to the activation layer to make the samples fall within HUA at the beginning of training.

\begin{figure}
\centering
\includegraphics[width=0.9\columnwidth]{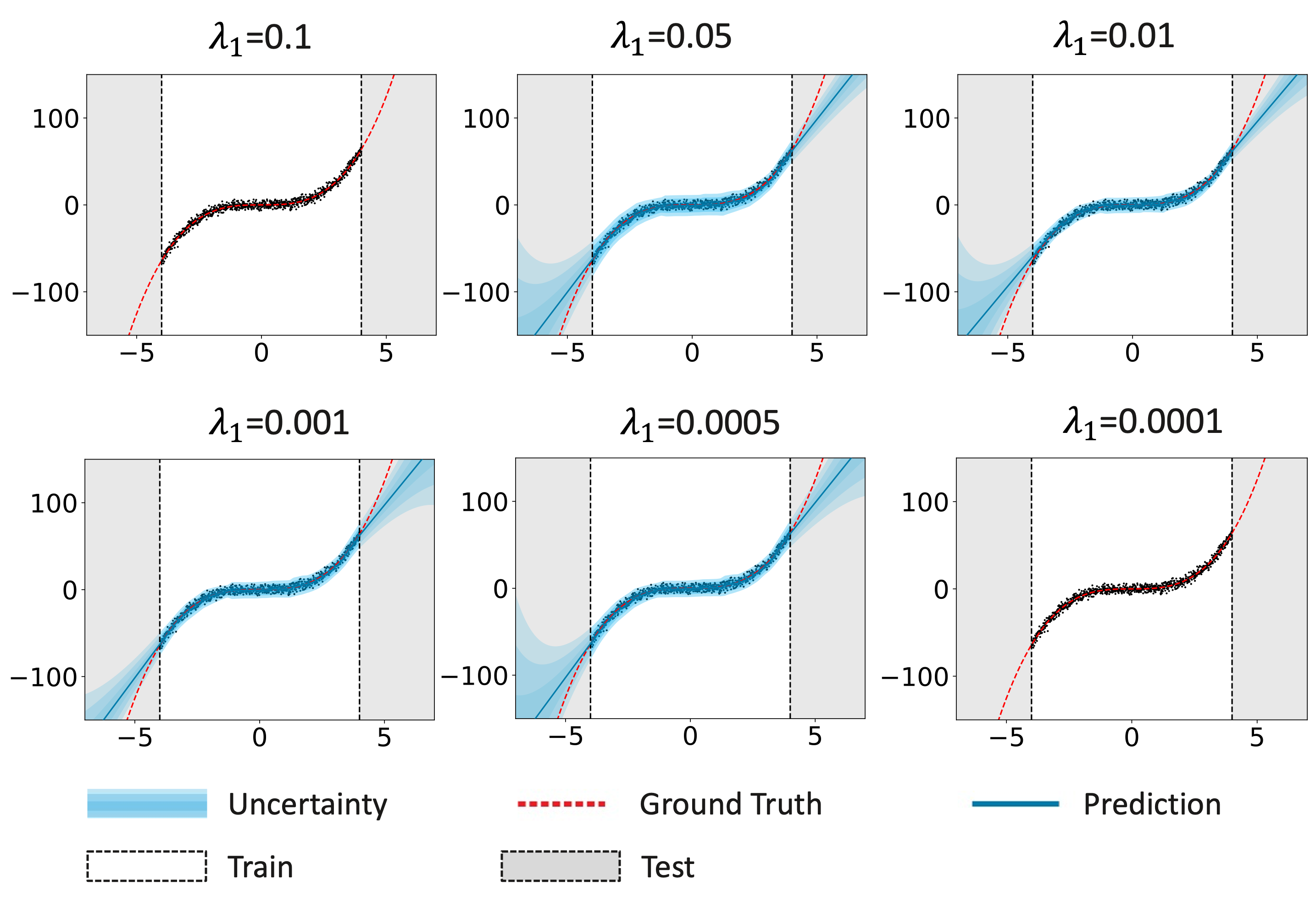} 
\caption{Impact of Uncertainty Regularization on model performance. The Figure shows model performance with different coefficients. 
Too small or too large coefficient $\lambda_{1}$ will make the regularization term infinite, failing to correctly estimate uncertainty.}
\label{sensitivity_outside_HUA_appen}
\end{figure}

\subsection{Additional Experiments}
\label{appendix_22}
\subsubsection{Performance on Real-World Benchmark}
We further evaluate the performance of models on the real-world benchmark, the UCI regression dataset \cite{hernandez2015probabilistic}. Our UCI regression experimental setup is the same as \citeauthor{hernandez2015probabilistic} (therefore outside HUA). Following previous works \cite{NEURIPS2020_aab08546,oh2022improving}, we use root mean squared error (RMSE) and negative log-likelihood (NLL) as evaluation metrics.

\begin{table}
\centering
\resizebox{\columnwidth}{!}{%
\begin{tabular}{@{}cccc@{}}
\toprule
\multirow{2}{*}{Datasets} & \multicolumn{3}{c}{RMSE}                                         \\ \cmidrule(l){2-4} 
                          & ERN                 & NLL-ERN             & UR-ERN               \\ \midrule
Boston                    & 3.06(0.16)          & 2.97(0.27)          & \textbf{2.91(0.28)}   \\
Concrete                  & 5.85(0.15)          & 5.76(0.16) & \textbf{5.74(0.24)}           \\
Energy                    & 2.06(0.10)          & 1.93(0.13)          & \textbf{1.88(0.09)}  \\
Kin8nm                    & 0.09(0.00)          & 0.06(0.00)          & \textbf{0.06(0.00)}  \\
Navel                     & 0.00(0.00)          & 0.00(0.00)          & \textbf{0.00(0.00)}  \\
Power                     & 4.23(0.09)          & \textbf{3.01(0.10)} & 3.02(0.08)           \\
Protein                   & 4.64(0.03)          & 3.71(0.15)          & \textbf{3.70(0.15)}  \\
Wine                      & 0.61(0.02)          & 0.57(0.02)          & \textbf{0.56(0.03)}  \\
Yacht                     & 1.57(0.56)          & \textbf{1.38(0.51)} & 1.57(0.54)           \\ \midrule
\multirow{2}{*}{Datasets} & \multicolumn{3}{c}{NLL}                                          \\ \cmidrule(l){2-4} 
                          & ERN                 & NLL-ERN             & UR-ERN               \\ \midrule
Boston                    & 2.35(0.06)          & 2.33(0.06)          & \textbf{2.33(0.06)}  \\
Concrete                  & \textbf{3.01(0.02)} & 3.05(0.03)          & 3.05(0.04)           \\
Energy                    & 1.39(0.06)          & 1.35(0.03)          & \textbf{1.35(0.03)}  \\
Kin8nm                    & -1.24(0.01)         & -1.36(0.03)         & \textbf{-1.37(0.03)} \\
Navel                     & -5.73(0.07)         & -6.24(0.08)         & \textbf{-6.27(0.06)} \\
Power                     & 2.81(0.07)          & \textbf{2.54(0.02)} & 2.55(0.02)           \\
Protein                   & 2.63(0.00)          & \textbf{2.42(0.03)}          & 2.44(0.05)  \\
Wine                      & 0.89(0.05)          & \textbf{0.88(0.04)} & 0.89(0.05)           \\
Yacht                     & 1.03(0.19)          & \textbf{0.98(0.17)} & 1.00(0.15)           \\ \bottomrule
\end{tabular}%
}
\caption{Experimental results on UCI regression benchmark. Following previous works \cite{NEURIPS2020_aab08546,oh2022improving}, we use RMSE and NLL as evaluation metrics.
The best scores are highlighted and we report standard errors in the parentheses. The proposed \ours performs better than ERN and stably well compared with NLL-ERN.
}
\label{tab:UCI}
\end{table}

Table~\ref{tab:UCI} presents the comparison of model performance on UCI regression. As is shown in the table, even outside HUA, \ours still performs better than ERN. NLL-ERN focuses on predicting the target, but it lags behind when estimating uncertainty. Still, \ours performs stably well compared with NLL-ERN on UCI regression benchmark.

We can observe from the UCI regression experiments that the proposed regularization not only helps the model make proper uncertainty estimation but also helps make better predictions of the target (model prediction is $\gamma$). This is not surprising because the gradient of loss function with respect to $\gamma$ is given by (no activation function is applied to $\gamma$):
\begin{equation}
\begin{aligned}
\frac{\partial \mathcal{L}^{\mathrm{NLL}}}{\partial o_\gamma} &=\frac{\partial \mathcal{L}^{\mathrm{NLL}}}{\partial \gamma}  \\
&=\frac{2 \nu\left(\gamma-y_i\right)(\alpha+0.5)}{2 \beta(\nu+1)+\nu\left(\gamma-y_i\right)^2} 
\end{aligned}
\end{equation}
where the parameters ($\gamma, v, \alpha, \beta$) are output of a neural network. If some samples fall in HUA, $\alpha$ will be hard to update, in turn influencing the update of $\gamma$.

\subsubsection{Depth Estimation outside HUA}
Figure~\ref{depth_outside_HUA_appen} shows the model performance outside HUA. As is shown from the figure, NLL-ERN doesn't perform well in uncertainty estimation, which aligns with the previous study \cite{NEURIPS2020_aab08546}. Compared to the performance  within HUA, ERN performs much better without the problem of zero gradients within HUA. But still, \ours is slightly better than ERN, showing the effectiveness of our method.

\subsubsection{Multivariate ERN}
Figure~\ref{fig:multivariate_appen} shows the prediction of the target for different models. As has been discussed before, we only study $\nu$ (decided by $p_{6}$) in this paper. Therefore, zero gradients of $\nu$ don't affect the prediction of the regression target ($\vec{\mu}$), which can explain the performance in Figure~\ref{fig:multivariate_appen}.

\subsection{Impact of Uncertainty Regularization}
\label{appendix_23}
We conduct experiments on Cubic Regression to investigate the influence of the strength ($\lambda_{1}$) of the proposed Uncertainty Regularization. The sensitivity experiments are conducted with \ours and outside HUA.
By exploring different values for the coefficient $\lambda_{1}$, we assess how the regularization term impacts the model's performance. This analysis helps us understand the sensitivity of our method to this hyperparameter and informs the optimal selection of $\lambda_{1}$ for enhanced uncertainty prediction. 
Figure~\ref{sensitivity_outside_HUA_appen} shows model performance with different coefficients $\lambda_{1}$. As is shown in the Figure, too small or too large coefficient $\lambda_{1}$ will make the regularization term infinite, failing to correctly estimate uncertainty. But if the coefficient is within a certain range, we can see the performance of \ours is pretty stable.

\end{document}